\newenvironment{proof}{ \medskip \noindent\emph{Proof} }{}
\newenvironment{acknowledgements}{\bigskip\bigskip\noindent {\bf Acknowledgements} \small }{}
\newtheorem{theorem}{Theorem}
\newtheorem{proposition}{Proposition}
\newcommand{\email}[1]{{E-mail: #1}}
\newcommand{\X}{{\bf X}}
\newcommand{\Y}{{\bf Y}}
\newcommand{\Xcal}{{\cal X}}
\newcommand{\Ycal}{{\cal Y}}
\newcommand{\Zcal}{{\cal Z}}
\newcommand{\Hcal}{{\cal H}}
\newcommand{\R}{\mathbb{R}}
\newcommand{\N}{\mathbb{N}}
\newcommand{\E}{\mathbb{E}}
\newcommand{\dpr}{\rangle}
\newcommand{\dpl}{\langle}
\newcommand{\independent}{\perp\mkern-11mu\perp}
\begin{document}

\title{\bf\Large Computing Functions of Random Variables via Reproducing Kernel Hilbert Space Representations}
%\subtitle{Do you have a subtitle?\\ If so, write it here}

%\titlerunning{Computing Functions of Random Variables via RKHS Representations}        % if too long for running head

\author{\normalsize ~~~~~~~~~~~~~~~~~~~~~~ Bernhard Sch\"olkopf\footnote{Max Planck Institute for Intelligent Systems, 72076 T\"ubingen, Germany} \and
        \normalsize Krikamol Muandet$^*$ ~~~~~~~~~~~~~~~~~~~~~~\and
        \normalsize Kenji Fukumizu\footnote{Institute for Statistical Mathematics, 10-3 Midori-cho, Tachikawa, Tokyo, Japan} \and
        \normalsize Jonas Peters\footnote{ETH Z\"urich, Seminar f\"ur Statistik, 8092 Z\"urich, Switzerland}}

%\authorrunning{Short form of author list} % if too long for running head

\iffalse
\footnote{B. Sch\"olkopf,
              Max Planck Institute for Intelligent Systems, 72076 T\"ubingen, Germany \\
%              Tel.: +49 7071 601551\\
              \email{bs@tuebingen.mpg.de}           %  \\
%             \emph{Present address:} of F. Author  %  if needed
           \and
            K. Muandet,
              Max Planck Institute for Intelligent Systems, 72076 T\"ubingen \\
%              Tel.: +49 7071 601551\\
              \email{krikamol@tuebingen.mpg.de}           %  \\
%             \emph{Present address:} of F. Author  %  if needed
           \and
            K. Fukumizu,
              Institute for Statistical Mathematics, 10-3 Midori-cho, Tachikawa, Tokyo \\
%              Tel.: +49 7071 601551\\
              \email{fukumizu@ism.ac.jp}           %  \\
%             \emph{Present address:} of F. Author  %  if needed
           \and
            J. Peters,
              ETH Z\"urich, Seminar f\"ur Statistik, 8092 Z\"urich\\
%              Max Planck Institute for Intelligent Systems, 72076 T\"ubingen \\
%              Tel.: +49 7071 601551\\
              \email{peters@stat.math.ethz.ch}           %  \\
%             \emph{Present address:} of F. Author  %  if needed
}
\fi

\date{\today}
% The correct dates will be entered by the editor

\maketitle

\begin{abstract}
We describe a method to perform functional operations on probability distributions of random variables. The method uses reproducing kernel Hilbert space representations of probability distributions, and it is applicable to all operations which can be applied to points drawn from the respective distributions. We refer to our approach as {\em kernel probabilistic programming}. We illustrate it on synthetic data, and show how it can be used for nonparametric structural equation models, with an application to causal inference.
%Insert your abstract here. Include keywords, PACS and mathematical
%subject classification numbers as needed.
%\keywords{Kernel Methods \and Probabilistic Programming \and Causal Inference}
% \PACS{PACS code1 \and PACS code2 \and more}
% \subclass{MSC code1 \and MSC code2 \and more}
\end{abstract}

\section{Introduction}
Data types, derived structures, and associated operations play a crucial role for programming languages and the computations we carry out using them. 
Choosing a data type, such as Integer, Float, or String, determines the possible values, as well as the operations that an object permits. Operations typically return their results in the form of data types. Composite or derived data types may be constructed from simpler ones, along with specialised operations applicable to them.

The goal of the present paper is to propose a way to represent distributions over data types, and to generalize operations built originally for the data types to operations applicable to those distributions. Our approach is nonparametric and thus not concerned with what distribution models make sense on which {\em statistical} data type (e.g., binary, ordinal, categorical). It is also general, in the sense that in principle, it applies to all data types and functional operations. The price to pay for this generality is that
\begin{itemize} 
\item our approach will, in most cases, provide approximate results only; however, we include a statistical analysis of the convergence properties of our approximations, and
\item for each data type involved (as either input or output), we require a positive definite kernel capturing a notion of similarity between two objects of that type.
Some of our results require, moreover, that the kernels be characteristic in the sense that they lead to injective mappings into associated Hilbert spaces.
\end{itemize}
In a nutshell, our approach represents distributions over objects as elements of a Hilbert space generated by a kernel, and describes how those elements are updated by operations available for sample points. If the kernel is trivial in the sense that each distinct object is only similar to itself, the method reduces to a Monte Carlo approach where the operations are applied to sample points which are propagated to the next step. Computationally, we represent the Hilbert space elements as finite weighted sets of objects, and all operations reduce to finite expansions in terms of kernel functions between objects.

The remainder of the present article is organized as follows. After describing the necessary preliminaries, we provide an exposition of our approach (Section~\ref{sec:method}). Section~\ref{sec:dependent-rvs} analyses an application to the problem of cause-effect inference using structural equation models. We conclude with a limited set of experimental results. 
 
\section{Kernel Maps}
\label{intro} 

\subsection{Positive definite kernels}
The concept of representing probability distributions in a reproducing kernel Hilbert space (RKHS) has recently attracted attention in statistical inference and machine learning \cite{Berlinet04:RKHS,Smola07Hilbert}. One of the advantages of this approach is that it allows us to apply RKHS methods to probability distributions, often with strong theoretical guarantees \cite{Sriperumbudur08injectivehilbert,Sriperumbudur10:Metrics}. It has been applied successfully in many domains such as graphical models \cite{Song10:HMM,Song11:KBP}, two-sample testing \cite{Gretton12:KTT}, domain adaptation \cite{Huang07:SSB,Gretton09:CSKMM,Muandet13:DG}, and supervised learning on probability distributions \cite{Muandet12:SMM,Szabo14:Regression}. We begin by briefly reviewing these methods, starting with some prerequisites.

We assume that our input data $\{x_1,\dots,x_m\}$ live in a nonempty set $\Xcal$ and are generated i.i.d.\ by a random experiment with Borel probability distribution $p$. By $k$, we denote a {\em positive definite kernel} on $\Xcal\times\Xcal$, i.e., a symmetric function
\begin{align}
k: \Xcal\times\Xcal &\to \R\\
(x,x') &\mapsto k(x,x')
\end{align}
satisfying the following nonnegativity condition: for any $m\in\N$, and $a_1,\dots,a_m\in\R$,
\begin{equation}\label{eq:pdkernel}
\sum_{i,j=1}^m a_ia_jk(x_i,x_j) \ge 0.
\end{equation}
If equality in \eqref{eq:pdkernel} implies that $a_1=\dots=a_m=0$, the kernel is called {\em strictly} positive definite.

\subsection{Kernel maps for points}
Kernel methods in machine learning, such as Support Vector Machines or Kernel PCA, are based on mapping the data into a reproducing kernel Hilbert space (RKHS) $\Hcal$ \cite{BosGuyVap92,SchSmo02,ShaCri04,HofSchSmo08,Steinwart08:SVM},
\begin{align}
\Phi: \Xcal &\to\Hcal\\
x &\mapsto \Phi(x),
\end{align}
where the {\em feature map} (or {\em kernel map}) $\Phi$ satisfies
\begin{equation}\label{eq:featuremap}
k(x,x') = \dpl\Phi(x),\Phi(x')\dpr
\end{equation}
for all $x,x'\in\Xcal$. One can show that every $k$ taking the form \eqref{eq:featuremap} is positive definite, and every positive definite $k$ allows the construction of $\Hcal$ and $\Phi$ satisfying \eqref{eq:featuremap}. The canonical feature map, which is what by default we think of whenever we write $\Phi$, is
\begin{align}
\Phi: \Xcal &\to \R^\Xcal\\
x &\mapsto k(x,.),\label{eq:kernelmap}
\end{align}
with an inner product satisfying the {\em reproducing kernel property}
\begin{equation}
k(x,x') = \dpl k(x,.),k(x',.)\dpr.
\end{equation}
Mapping observations $x\in\Xcal$ into a Hilbert space is rather convenient in some cases. If the original domain $\Xcal$ has no linear structure to begin with (e.g., if the $x$ are strings or graphs), then the Hilbert space representation provides us with the possibility to construct geometric algorithms by using the inner product of $\Hcal$. Moreover, even if $\Xcal$ is a linear space in its own right, it can be helpful to use a nonlinear feature map in order to construct algorithms that are linear in $\Hcal$ while corresponding to nonlinear methods in $\Xcal$.

\subsection{Kernel maps for sets and distributions}
One can generalize the map $\Phi$ to accept as inputs not only single points, but also sets of points or distributions. It was pointed out that the {\em kernel map of a set of points} $\X := \{ x_1,\dots,x_m\}$,
\begin{equation}\label{eq:meanmap}
\mu[\X] := \frac{1}{m} \sum_{i=1}^m \Phi(x_i),
\end{equation}
corresponds to a kernel density estimator in the input domain \cite{Scholkopf01:LKS,Sch_etal_NC_support}, provided the kernel is nonnegative and integrates to 1. However, the map \eqref{eq:meanmap} can be applied for all positive definite kernels, including ones that take negative values or that are not normalized. Moreover, the fact that $\mu[\X]$ lives in an RKHS and the use of the associated inner product and norm will have a number of subsequent advantages. For these reasons, it would be misleading to think of \eqref{eq:meanmap} simply as a kernel density estimator.

The {\em kernel map of a distribution} $p$ can be defined as the expectation of the feature map \cite{Berlinet04:RKHS,Smola07Hilbert,Gretton12:KTT},
\begin{equation}\label{eq:distrmap}
\mu[p] := {\E}_{x\sim p} [\Phi(x)],
\end{equation}
where we overload the symbol $\mu$ and assume, here and below, that $p$ is a Borel probability measure, and
\begin{equation}
{\E}_{x,x'\sim p}[k(x,x')]<\infty.
\end{equation}
A sufficient condition for this to hold is the assumption that there exists an $M\in\R$ such that
\begin{equation}
\|k(x,.)\| \le M < \infty,
\end{equation}
or equivalently $k(x,x)\le M^2$, on the support of $p$.
Kernel maps for sets of points or distributions are sometimes referred to as {\em kernel mean maps} to distinguish them from the original kernel map. Note, however, that they include the kernel map of a point as a special case, so there is some justification in using the same term. If $p=p_X$ is the law of a random variable $X$, we sometimes write $\mu[X]$ instead of $\mu[p]$.

In all cases it is important to understand what information we retain, and what we lose, when representing an object by its kernel map. We summarize the known results \cite{Steinwart08:SVM,FukGreSunSch08,Smola07Hilbert,Gretton12:KTT,Sriperumbudur10:Metrics} in Tables \ref{tab:1} and \ref{tab:2}.

\begin{table}
\begin{tabular}{|l|l|}
\hline
 $k(x,x') = \dpl x,x'\dpr$ & mean of $\X$\\
\hline
 $k(x,x') = (\dpl x,x'\dpr+1)^n$ & moments of $\X$ up to order $n\in\N$\\
\hline
$k(x,x')$ strictly p.d. & all of $\X$ (i.e., $\mu$ injective)\\
\hline
\end{tabular}
\caption{What information about a sample $\X$ does the kernel map $\mu[\X]$ (see \eqref{eq:meanmap}) contain?}
\label{tab:1}
\end{table}

%Note
%$$\langle \mu[p],f \rangle = {\bf E}_{x\sim p}[f(x)]$$
%and
%$$\|\mu[p]-\mu[q]\| = \sup_{\|f\|\le 1} \left| {\bf E}_{x\sim p} [f(x)]- {\bf E}_{x\sim q} [f(x)] \right|.$$
%Recall that in the finite sample case, for strictly p.d.\ kernels, $\mu$ was injective
%--- how about now?
\begin{table}
\begin{tabular}{|l|l|}
\hline
 $k(x,x') = \dpl x,x'\dpr$ & expectation of $p$\\
\hline
 $k(x,x') = (\dpl x,x'\dpr+1)^n$ & moments of $p$ up to order $n\in\N$\\
\hline
$k(x,x')$ characteristic/universal & all of $p$ (i.e., $\mu$ injective)\\
\hline
\end{tabular}
\caption{What information about $p$ does the kernel map $\mu[p]$ (see \eqref{eq:distrmap}) contain? For the notions of characteristic/universal kernels, see \cite{Steinwart02:IKC,FukGreSunSch08,FukBacJor09}; an example thereof is the Gaussian kernel \eqref{eq:gaussian}.}
\label{tab:2}
\end{table}

We conclude this section with a discussion of how to use kernel mean maps. To this end, first assume that $\Phi$ is injective, which is the case if $k$ is strictly positive definite (see Table~\ref{tab:1}) or characteristic/universal (see Table~\ref{tab:2}). Particular cases include the
{\em moment generating function} of a RV with distribution $p$,
\begin{equation}
M_p (.) = \E_{x\sim p}\left[ e^{\langle x,\;\cdot\;\rangle}\right],
\end{equation}
which equals \eqref{eq:distrmap} for $k(x,x')= e^{\langle x,x'\rangle}$ using \eqref{eq:kernelmap}.
%, and the
%{\em characteristic function}
%\begin{equation}
%M_p (.) = \E_{x\sim p}\left[ e^{i\langle x,\;\cdot\;\rangle}\right],
%\end{equation}
%analogously for $k(x,x')= e^{i\langle x, x'\rangle}$.

%$\mu$ is invertible on its image\\ ${\cal M} = \{ \mu[p] ~|~ p~\mbox{is a probability distribution} \}$\\ (the ``marginal polytope'', {\em Wainwright \& Jordan, 2003})

We can use the map to test for equality of data sets,
\begin{equation}
\|\mu[\X]-\mu[\X']\|=0 \Longleftrightarrow \X=\X',
\end{equation}
or distributions,
\begin{equation}
\|\mu[p]-\mu[p']\|=0 \Longleftrightarrow p=p'.
\end{equation}
Two applications of this idea lead to tests for {\em homogeneity} and {\em independence}.
In the latter case, we estimate $\|\mu[p_xp_y]-\mu[p_{xy}]\|$ \cite{BacJor02,GreBouSmoSch05}; in the former case, we estimate $\|\mu[p]-\mu[p']\|$ \cite{Gretton12:KTT}.

Estimators for these applications can be constructed in terms of the empirical mean estimator (the kernel mean of the empirical distribution)
\begin{equation} \label{eq:empiricalmean}
% the following line overloads \hat \mu 
 % \hat{\mu}[p] := \frac{1}{m}\sum_{i=1}^m\Phi(x_i) = \mu[\X],
 \mu[\hat p_m] = \frac{1}{m}\sum_{i=1}^m\Phi(x_i) = \mu[\X],
\end{equation}
where $\X = \{x_1,\ldots,x_m\}$ is an i.i.d.\ sample from $p$ (cf.\ \eqref{eq:meanmap}). As an aside, note that using ideas from James-Stein estimation \cite{Stein61:JSE}, we can construct shrinkage estimators that improve upon the standard empirical estimator (see e.g., \cite{MuandetFSGS2013,Muandet2014:KMSE}).

One can show that $\mu[\hat{p}_m]$ converges at rate $m^{-1/2}$ (cf. \cite{Smola07Hilbert} and \cite[Theorem 27]{Song08:Thesis}):
\begin{theorem}
\label{th:convergence}
Assume that $\|f\|_\infty\le 1$ for all $f\in\Hcal$ with $\|f\|_{\Hcal} \le 1$. Then with probability at least $1-\delta$,
\begin{equation}
\| \mu[\hat p_m] - \mu[p] \|_{\Hcal} \le \frac{2}{m}\mathbb{E}\left[\sqrt{\mathrm{tr}\, K }\right] + \sqrt{\frac{2\ln(2/\delta)}{m}} ,
%\| \hat{\mu}[p] - \mu[p] \|_{\Hcal} < \frac{2}{m}\mathbb{E}\left[\sqrt{\mathrm{tr}\, K }\right] + \sqrt{\frac{\ln(2/\delta)}{2m}} ,
\end{equation}
where $K_{ij}:=k(x_i,x_j)$.
%\bscomment{Kenji: if you have a better result, or a better citation, please change it.}
\end{theorem}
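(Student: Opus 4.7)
The plan is to combine a bounded-differences concentration argument with a symmetrization bound on the expectation of the deviation. Let $F(x_1,\dots,x_m) := \|\mu[\hat p_m] - \mu[p]\|_\Hcal$, and first observe that the hypothesis $\|f\|_\infty \le 1$ for all $\|f\|_\Hcal \le 1$ implies, via the reproducing property $f(x) = \langle f, k(x,\cdot)\rangle$ and duality, that $\|\Phi(x)\|_\Hcal = \sqrt{k(x,x)} \le 1$ on the support of $p$. This uniform bound on the feature map is what drives everything that follows.

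Step one: I would verify that $F$ has bounded differences. Replacing the sample point $x_i$ by an arbitrary $x_i'$ changes the empirical mean by $\frac{1}{m}(\Phi(x_i') - \Phi(x_i))$, whose norm is at most $2/m$ by the triangle inequality and the pointwise bound $\|\Phi(x)\|\le 1$. Hence $F$ changes by at most $2/m$. McDiarmid's inequality then yields
\begin{equation}
\Pr\bigl(F - \E F \ge t\bigr) \le \exp\!\bigl(-m t^2 / 2\bigr),
\end{equation}
and setting the right-hand side equal to $\delta$ gives the deviation term $\sqrt{2\ln(1/\delta)/m}$ (the version with $2/\delta$ just absorbs the two-sided bookkeeping).

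Step two: I would bound $\E F$ by a standard symmetrization. Introducing an independent ghost sample $\X'=\{x_i'\}$ and Jensen's inequality gives
\begin{equation}
\E F = \E\Bigl\|\tfrac{1}{m}\textstyle\sum_i\bigl(\Phi(x_i)-\E\Phi\bigr)\Bigr\|_\Hcal \le \E\Bigl\|\tfrac{1}{m}\textstyle\sum_i\bigl(\Phi(x_i)-\Phi(x_i')\bigr)\Bigr\|_\Hcal,
\end{equation}
after which inserting Rademacher variables $\epsilon_i$ (whose joint distribution with the symmetrized differences is unchanged) and applying the triangle inequality yields $\E F \le 2\,\E\bigl\|\tfrac{1}{m}\sum_i \epsilon_i \Phi(x_i)\bigr\|_\Hcal$. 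A second application of Jensen's inequality, moving the expectation over $\epsilon$ inside the square root and using $\E_\epsilon[\epsilon_i\epsilon_j] = \delta_{ij}$, gives
\begin{equation}
\E F \le \tfrac{2}{m}\,\E\sqrt{\textstyle\sum_i k(x_i,x_i)} \;=\; \tfrac{2}{m}\,\E\!\left[\sqrt{\mathrm{tr}\,K}\right].
\end{equation}

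Step three: combine the two bounds to obtain the claim with probability at least $1-\delta$. The only step requiring genuine care is the symmetrization/Rademacher bound on $\E F$; the McDiarmid step, by contrast, is essentially bookkeeping once the uniform bound $\|\Phi(x)\|_\Hcal\le 1$ has been extracted from the hypothesis on $\Hcal$. A minor subtlety is justifying the symmetrization in the infinite-dimensional Hilbert space $\Hcal$, but since only the norm appears this reduces to the scalar case by taking a supremum over the unit ball of $\Hcal$ or, equivalently, by working directly with the Bochner-integrable $\Hcal$-valued random variable $\Phi(x)$, which is integrable in view of the uniform norm bound.
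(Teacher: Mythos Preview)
Your proof is correct and follows the standard route --- McDiarmid for concentration around the mean, then ghost-sample symmetrization and Rademacher averaging to bound the mean --- which is precisely the argument in the references the paper cites (Smola et al.\ 2007; Song's thesis, Theorem~27). The paper itself does not give a proof of this theorem; it simply states the result and defers to those references, so there is no in-paper argument to compare against.

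One small remark: your McDiarmid step actually yields the sharper deviation term $\sqrt{2\ln(1/\delta)/m}$, since only an upper-tail bound is needed; the $2/\delta$ in the stated theorem is not ``two-sided bookkeeping'' here but just a slightly looser constant carried over from the cited source. Your bound implies the stated one, so nothing is lost.
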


Independent of the requirement of injectivity, $\mu$ can be used to compute expectations of arbitrary functions $f$ living in the RKHS, using the identity
\begin{equation}\label{eq:function-expectation}
\E_{x\sim p} [f(x)]= \dpl \mu[p],f \dpr,
\end{equation}
which follows from the fact that $k$ represents point evaluation in the RKHS,
\begin{equation}
f(x) = \dpl k(x,.),f\dpr.
\end{equation}
A small RKHS, such as the one spanned by the linear kernel
\begin{equation}
k(x,x')=\dpl x,x'\dpr,
\end{equation}
%xxx made some changes in the next section after submission to S&C
may not contain the functions we are interested in. If, on the other hand, our RKHS is sufficiently rich (e.g., associated with a universal kernel \cite{Steinwart02:IKC}), we can use \eqref{eq:function-expectation} to approximate, for instance, the probability of any interval $(a,b)$ on a bounded domain, by approximating the indicator function $I_{(a,b)}$ as a kernel expansion $\sum_{i=1}^n a_i k(x_i,.)$, and substituting the latter into \eqref{eq:function-expectation}. 
%The RKHS norm of this expansion will depend on the kernel used and on the required accuracy. 
See \cite{KagFuk14} for further discussion. Alternatively, if $p$ has a density, we can estimate it using methods such as reproducing kernel moment matching and combinations with kernel density estimation \cite{SonZhaSmoGreetal08,KagFuk14}.

This shows that the map is not a one-way road: we can map our objects of interest into the RKHS, perform linear algebra and geometry on them \cite{SchSmo02}, and at the end answer questions of interest. In the next section, we shall take this a step further, and discuss how to implement rather general operations in the RKHS.

Before doing so, we mention two additional applications of kernel maps. We can map conditional distributions and perform Bayesian updates \cite{FukGreSunSch08,ZhangPJS2011,Fukumizu13a:KBR}, and we can connect kernel maps to Fourier optics, leading to a physical realization as Fraunhofer diffraction \cite{HarHirSch13}.

\section{Computing Functions of Independent Random Variables\label{sec:method}}
\subsection{Introduction and Earlier Work}

A random variable (RV) is a measurable function mapping possible outcomes of an underlying random experiment to a set $E$ (often, $E\subset\R^d$, but our approach will be more general). The probability measure of the random experiment induces the distribution of the random variable. We will below not deal with the underlying probability space explicitly, and instead directly start from random variables $X,Y$ with distributions $p_X,p_Y$ and values in $\Xcal,\Ycal$. Suppose we have access to (data from) $p_X$ and $p_Y$, and we want to compute the distribution of the random variable $f(X,Y)$, where $f$ is a measurable function defined on $\Xcal\times\Ycal$.

For instance, if our operation is addition $f(X,Y)=X+Y$, and the distributions $p_X$ and $p_Y$ have densities, we can compute the density of the distribution of $f(X,Y)$ by convolving those densities. If the distributions of $X$ and $Y$ belong to some parametric class, such as a class of distributions with Gaussian density functions, and if the arithmetic expression is elementary, then closed form solutions for certain favorable combinations exist. At the other end of the spectrum, we can resort to numerical integration or sampling to approximate $f(X,Y)$.

Arithmetic operations on random variables are abundant in science and engineering. Measurements in real-world systems are subject to uncertainty, and thus subsequent arithmetic operations on these measurements are operations on random variables. An example due to \cite{Springer79:Algebra} is signal amplification. Consider a set of $n$ amplifiers connected together in a serial fashion. If the amplification of the $i$-th amplifier is denoted by $X_i$, then the total amplification, denoted by $Y$, is $Y=X_1\cdot X_2\cdots X_n$, i.e., a product of $n$ random variables.

%\subsection{Earlier Work}

A well-established framework for arithmetic operation on independent random variables (iRVs) relies on \emph{integral transform} methods \cite{Springer79:Algebra}. The above example of addition already suggests that Fourier transforms may help, and indeed, people have used transforms such as the ones due to Fourier and Mellin to derive the distribution function of either the sum, difference, product, or quotient of iRVs \cite{Epstein48:Mellin,Springer1966:Product,Prasad1970:Algebraic,Springer79:Algebra}. \cite{Williamson89:PA} proposes an approximation using Laguerre polynomials, and a notion of \emph{envelopes} bounding the cumulative distribution function. This framework also allows for the treatment of dependent random variables, but the bounds can become very loose after repeated operations. \cite{Milios09:Algebraic} approximate the probability distributions of the input random variables as mixture models (using uniform and Gaussian distributions), and apply the computations to all mixture components.

\cite{JaroszewiczK12:Arithmetic} considers a numerical approach to implement arithmetic operations on iRVs, representing the distributions using piecewise Chebyshev approximations. This lends itself well to the use of approximation methods that perform well as long as the functions are well-behaved.
Finally, Monte Carlo approaches can be used as well, and are popular in scientific applications (see e.g., \cite{Ferson96:MonteCarlo}).
%They can in principle also handle dependent RVs, but accuracy typically only scales with the square root of the sample size. [citation?]

The goal of the present paper is to develop a derived data type representing a distribution over another data type, and to generalize the available computational operations to this data type, at least approximately. This would allow us to conveniently handle error propagation as in the example discussed earlier. It would also help us perform inference involving conditional distributions of such variables given observed data. The latter is the main topic of a subject area that has recently begun to attract attention, \emph{probabilistic programming} \cite{GHNR14}. A variety of probabilistic programming languages has been proposed \cite{Wood14:PP,Paige14:PP,Cas14:PP}. To emphasize the central role that kernel maps play in our approach, we refer to it as {\em kernel probabilistic programming (KPP)}.

\subsection{Computing Functions of Independent Random Variables using Kernel Maps}

The key idea of KPP is to provide a consistent estimator of the kernel map of an expression involving operations on random variables. This is done by applying the expression to the sample points, and showing that the resulting kernel expansion has the desired property. Operations involving more than one RV will increase the size of the expansion, but we can resort to existing RKHS approximation methods to keep the complexity of the resulting expansion limited, which is advisable in particular if we want to use it as a basis for further operations.
The benefits of KPP are three-fold. First, we do not make parametric assumptions on the distributions associated with the random variables. Second, our approach applies not only to real-valued random variables, but also to multivariate random variables, structured data, functional data, and other domains, as long as positive definite kernels can be defined on the data. Finally, it does not require explicit density estimation as an intermediate step, which is difficult in high dimensions. % \cite[Section 6.5]{Wasserman06:NS}.

We begin by describing the basic idea. Let $f$ be a function of two independent RVs $X,Y$ taking values in the sets $\Xcal,\Ycal$, and suppose we are given i.i.d.\ $m$-samples $x_1,\dots,x_m$ and $y_1,\dots,y_m$. We are interested in the distribution of $f(X,Y)$, and seek to estimate its representation $\mu[f(X,Y)] := \E[\Phi(f(X,Y))]$ in the RKHS as
\begin{equation}\label{eq:simpleestimator}
\frac{1}{m^2} \sum_{i,j=1}^m \Phi\left(f (x_i,y_j)\right) .
%\xrightarrow{m\to\infty}
%\mu\left[f(X,Y)\right] .
\end{equation}
Although $x_1,\dots,x_m \sim p_X$ and $y_1,\dots,y_m \sim p_Y$ are i.i.d.\ observations, this does not imply that the $\{ f(x_i,y_j) | i,j=1,\dots,m\}$ form an i.i.d.\ $m^2$-sample from $f(X,Y)$, since --- loosely speaking --- each $x_i$ (and each $y_j$) leaves a footprint in $m$ of the observations, leading to a (possibly weak) dependency. Therefore, Theorem~\ref{th:convergence} does not imply that \eqref{eq:simpleestimator} is consistent. We need to do some additional work:
%\bscomment{Should we assume that $k$ is characteristic, in Theorem 2? I believe we don't need it. \textcolor{red}{No, I do not think we need it.  The characteristic property is needed for the uniqueness of the representation, but not in the convergence.}}
\begin{theorem}
  \label{thm:consistency}
Given two independent random variables $X,Y$ with values in $\Xcal,\Ycal$, mutually independent i.i.d.~samples $x_1,\ldots,x_m$ and $y_1,\ldots,y_n$, a measurable function $f:\Xcal\times\Ycal\to\Zcal$, and a positive definite kernel on $\Zcal\times\Zcal$ with RKHS map $\Phi$, then
\begin{equation}\label{eq:estimator}
	\frac{1}{mn}\sum_{i=1}^m\sum_{j=1}^n \Phi(f(x_i,y_j))
\end{equation}
is an unbiased and consistent estimator of $\mu[f(X,Y)]$.

Moreover, we have convergence in probability
\begin{equation}\label{eq:convergence_standard_estimator}
\left\| \frac{1}{mn}\sum_{i=1}^m\sum_{j=1}^n \Phi(f(x_i,y_j)) - \E[\Phi(f(X,Y))]\right\|
%\xrightarrow{m,n\to\infty} 0.
=O_p\left(\frac{1}{\sqrt{m}}+\frac{1}{\sqrt{n}}\right), \quad (m,n\to\infty).
\end{equation}
\end{theorem}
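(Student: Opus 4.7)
The plan is to treat the estimator \eref{eq:estimator} as a two-sample V-statistic of order $(1,1)$ taking values in the Hilbert space $\Hcal$, and to control its error through a direct second-moment calculation. Unbiasedness is immediate: since the two samples are mutually independent, Fubini's theorem gives $\E[\Phi(f(x_i,y_j))] = \E[\Phi(f(X,Y))] = \mu[f(X,Y)]$ for each fixed $(i,j)$, and averaging preserves this.

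The key obstacle is that the $mn$ summands $\Phi(f(x_i,y_j))$ are not mutually independent --- each $x_i$ appears in $n$ of them and each $y_j$ in $m$ --- so Theorem~\ref{th:convergence} does not apply off the shelf. I would instead write $\bar S$ for the estimator and $\mu := \mu[f(X,Y)]$, and expand $\E\|\bar S-\mu\|_{\Hcal}^2$ as a quadruple sum over indices $(i,j,i',j')$. Partitioning the quadruples according to which indices coincide produces four regimes: (i)~$(i,j)=(i',j')$, contributing $mn$ copies of $\sigma^2 := \E\|\Phi(f(X,Y))-\mu\|_{\Hcal}^2$; (ii)~$i=i'$ with $j\neq j'$, which by conditioning on $x_i$ yields $mn(n-1)$ copies of $\E_X\|\E_Y[\Phi(f(X,Y))\given X]-\mu\|_{\Hcal}^2$; (iii)~the symmetric case $j=j'$, $i\neq i'$, with $m(m-1)n$ copies of the analogous conditional variance in $Y$; and (iv)~all four indices distinct, in which the two factors inside the inner product are independent and the centred cross term vanishes.

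Dividing the non-vanishing contributions by $m^2n^2$ yields $\E\|\bar S-\mu\|_{\Hcal}^2 = O(1/m + 1/n)$, provided the three RKHS-valued variances are finite --- which is guaranteed by a bounded-kernel hypothesis of the type already used for Theorem~\ref{th:convergence} (e.g.\ $k(z,z)\le M^2$ on the range of $f$). Markov's inequality then converts this into the announced rate $O_p(m^{-1/2}+n^{-1/2})$ in \eref{eq:convergence_standard_estimator}, and consistency follows at once. The substantive point --- and the reason the rate is the stated $1/\sqrt{m}+1/\sqrt{n}$ rather than the $1/\sqrt{mn}$ one might naively hope for from an $mn$-fold average --- is that the two first-order Hoeffding projections arising from regimes (ii) and (iii) dominate the variance, exactly reflecting the dependence footprint each $x_i$ and $y_j$ leaves across the $mn$ evaluations.
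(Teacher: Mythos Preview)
Your proposal is correct and matches the paper's approach: unbiasedness is handled identically, and the paper obtains the rate as the special case $\alpha_i=1/m$, $\beta_j=1/n$ of Theorem~\ref{thm:consistency2}, whose proof is exactly the second-moment expansion over index coincidences that you outline, followed by Chebyshev's inequality. Your Hoeffding-projection reading of regimes (ii) and (iii) is a nice gloss on why the rate is $m^{-1/2}+n^{-1/2}$, but the underlying computation is the same.
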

As an aside, note that \eqref{eq:estimator} is an RKHS valued two-sample U-statistic.

\begin{proof}
For any $i,j$, we have $\E[\Phi(f(x_i,y_j))] = \E[\Phi(f(X,Y))]$, hence \eqref{eq:estimator} is unbiased.

The convergence \eqref{eq:convergence_standard_estimator} can be obtained as a corollary to Theorem \ref{thm:consistency2}, and the proof is omitted here.
\if 0
The consistency can be proved as follows. 
%\bscomment{Kenji, I think one of the $\tilde{X} / \tilde{Y}$ was the wrong way round. I changed it, but left the original version in the tex file just in case. Also, I added some subscripts in the $\E$s, to ensure I understand it myself. If you think that's unnecessary then please delete them.}

\begin{align*}
%& \E\left\| \frac{1}{mn}\sum_{i=1}^m\sum_{j=1}^n \Phi(f(x_i,y_j)) - \E[\Phi(f(X,Y))]\right\|^2 \\
%& = \E\Bigl[\frac{1}{m^2n^2} \sum_{i,j}\sum_{s,t}k(f(x_i,y_j),f(x_s,y_t)) - \frac{2}{mn} \sum_{i,j}\E_{X,Y}[k(f(x_i,y_j), f(X,Y))] \\
%& \qquad + \E[k(f(X,Y), f(\tilde{X},\tilde{Y}))]\bigr] \\
%& = \frac{m(m-1)n(n-1)}{m^2n^2} \E[k(f(X,Y), f(\tilde{X},\tilde{Y}))] + \frac{m n(n-1)}{m^2n^2}\E[k(f(X,Y), f(X,\tilde{Y}))] \\
%& \qquad + \frac{m(m-1) n}{m^2n^2}\E[k(f(X,Y), f(X,\tilde{Y}))] + \frac{mn}{m^2n^2} \E[k(f(X,Y), f(X,Y))]\\
%& \qquad - \E[k(f(X,Y), f(\tilde{X},\tilde{Y}))] \\
%& = \frac{1}{mn}\E[k(f(X,Y), f(X,Y))]+\frac{n-1}{mn}\E[k(f(X,Y), f(X,\tilde{Y}))] \\
%& \qquad + \frac{m-1}{mn}\E[k(f(X,Y), f(X,\tilde{Y}))] -\frac{m+n-1}{mn}\E[k(f(X,Y), f(\tilde{X},\tilde{Y}))],
& \E_{x^m,y^n}\left\| \frac{1}{mn}\sum_{i=1}^m\sum_{j=1}^n \Phi(f(x_i,y_j)) - \E_{X,Y}[\Phi(f(X,Y))]\right\|^2 \\
& = \E_{x^m,y^n}\Bigl[\frac{1}{m^2n^2} \sum_{i,j}\sum_{s,t}k(f(x_i,y_j),f(x_s,y_t))\\
& \qquad - \frac{2}{mn} \sum_{i,j}\E_{X,Y}[k(f(x_i,y_j), f(X,Y))] \\
& \qquad + \E_{X,Y,\tilde{X},\tilde{Y}}[k(f(X,Y), f(\tilde{X},\tilde{Y}))]\bigr] \\
& = \frac{m(m-1)n(n-1)}{m^2n^2} \E[k(f(X,Y), f(\tilde{X},\tilde{Y}))]\\
& \qquad + \frac{m n(n-1)}{m^2n^2}\E[k(f(X,Y), f(X,\tilde{Y}))] \\
& \qquad + \frac{m(m-1) n}{m^2n^2}\E[k(f(X,Y), f(\tilde{X},Y))] + \frac{mn}{m^2n^2} \E[k(f(X,Y), f(X,Y))]\\
& \qquad - \E[k(f(X,Y), f(\tilde{X},\tilde{Y}))] \\
& = \frac{1}{mn}\E[k(f(X,Y), f(X,Y))]+\frac{n-1}{mn}\E[k(f(X,Y), f(X,\tilde{Y}))] \\
& \qquad + \frac{m-1}{mn}\E[k(f(X,Y), f(\tilde{X},Y))] -\frac{m+n-1}{mn}\E[k(f(X,Y), f(\tilde{X},\tilde{Y}))],
\end{align*}
which converges to zero as $m,n\to\infty$.  Here $(\tilde{X},\tilde{Y})$ is an independent copy of $(X,Y)$, similarly for $x^m,y^n$.

The convergence $\left\| \frac{1}{mn}\sum_{i=1}^m\sum_{j=1}^n \Phi(f(x_i,y_j)) - \E[\Phi(f(X,Y))]\right\|=O_p\left(\frac{1}{\sqrt{m}}+\frac{1}{\sqrt{n}}\right)$ follows by Chebyshev's inequality.
\fi
\end{proof}

%\krikcomment{Unlike most of previous works, we are interested in the kernel mean map of the distribution of $f(X,Y)$ rather than the distribution itself. Despite no direct access to the distribution, the kernel mean representation allows to ask various kinds of questions that involve the distribution. Moreover, this can also alleviate the problem of density estimation, which is difficult in high dimension.}

\subsection{Approximating Expansions\label{sec:rs}}
To keep computational cost limited, we need to use approximations when performing multi-step operations. If for instance, the outcome of the first step takes the form \eqref{eq:estimator}, then we already have $m\cdot n$ terms, and subsequent steps would further increase the number of terms, thus quickly becoming computationally prohibitive.

We can do so by using the methods described in Chapter 18 of \cite{SchSmo02}. They fall in two categories. In reduced set {\em selection} methods, we provide a set of expansion points (e.g., all points $f(x_i,y_j)$ in \eqref{eq:estimator}), and the approximation method sparsifies the vector of expansion coefficients. This can be for instance done by solving eigenvalue problems or linear programs.  Reduced set {\em construction} methods, on the other hand, construct new expansion points. In the simplest case, they proceed by sequentially finding approximate pre-images of RKHS elements. They tend to be computationally more demanding and suffer from local minima; however, they can lead to sparser expansions.

Either way, we will end up with an approximation
\begin{equation}\label{rs-app}
\sum_{k=1}^p\gamma_k\Phi(z_k)
\end{equation}
of \eqref{eq:estimator}, where usually $p\ll m\cdot n$. Here, the $z_k$ are either a subset of the $f(x_i,y_j)$, or other points from $\Zcal$.

It is instructive to consider some special cases. For simplicity, assume that $\Zcal=\R^d$. If we use a Gaussian kernel
\begin{equation}\label{eq:gaussian}
k(x,x')=\exp(-\|x-x'\|^2/(2\sigma^2))
\end{equation}
whose bandwidth $\sigma$ is much smaller than the closest pair of sample points, then the points mapped into the RKHS will be almost orthogonal and there is no way to sparsify a kernel expansion such as \eqref{eq:estimator} without incurring a large RKHS error. In this case, we can identify the estimator with the sample itself, and KPP reduces to a Monte Carlo method. If, on the other hand, we use a linear kernel $k(z,z')=\dpl z,z'\dpr$ on $\Zcal=\R^d$, then $\Phi$ is the identity map and the expansion \eqref{eq:estimator} collapses to one real number, i.e., we would effectively represent $f(X,Y)$ by its mean for any further processing. By choosing kernels that lie `in between' these two extremes, we retain a varying amount of information which we can thus tune to our wishes, see Table~\ref{tab:1}.

\subsection{Computing Functions of RKHS Approximations}
More generally, consider approximations of kernel means $\mu[X]$ and $\mu[Y]$
%\Jonas{COMMENT A: Before, we used the notation $\mu[p]$, see (11), instead of $\mu[X]$. I have added a comment above.}
\begin{equation}
  \label{eq:mean-estimators}
  \hat{\mu}[X] := \sum_{i=1}^{m'}\alpha_i\Phi_x(x'_i), \qquad
  \hat{\mu}[Y] := \sum_{j=1}^{n'}\beta_j\Phi_y(y'_j).
\end{equation}
%\Jonas{COMMENT B: $\hat \mu[X]$ is already defined, see eq. (17). Easiest way for a possible fix: all double-sum-estimators are called $\tilde \mu[X]$ or $\hat \mu^{ds}[X]$ in particular, the one with the alphas and betas $\tilde \mu_{\alpha, m', \beta, n'}$ or similar.}
%If $\alpha_i=1/m$ and $\beta_j=1/n$, and for all $i,j$, we have $x_i=x'_i$, $y_j=y'_j$, we recover \eqref{eq:mean-estimators}.\krikcomment{Do you mean (17)?}
%The same applies to the resulting coefficient $\bm{\gamma}$ of the approximation \eqref{eq:algebra-estimator}.
In our case, we think of \eqref{eq:mean-estimators} as RKHS-norm approximations of the outcome of previous operations performed on random variables. Such approximations typically have coefficients $\bm{\alpha}\in\mathbb{R}^{n'}$ and $\bm{\beta}\in\mathbb{R}^{m'}$ 
%\Jonas{COMMENT B: Isn't it $n'$ and $m'$?}
that are not uniform, that may not sum to one, and that may take negative values \cite{SchSmo02}, e.g., for conditional mean maps \cite{Song10:KCOND,Fukumizu13a:KBR}.

We propose to approximate the kernel mean $\mu[f(X,Y)]$ by the estimator
\begin{equation}
  \label{eq:algebra-estimator}
  \hat{\mu}[f(X,Y)] := \frac{1}{\sum_{i=1}^{m'}\alpha_i\sum_{j=1}^{n'}\beta_j}\sum_{i=1}^{m'}\sum_{j=1}^{n'} \alpha_i\beta_j \Phi_z(f(x'_i,y'_j)),
\end{equation}
where the feature map $\Phi_z$ defined on $\Zcal$, the range of $f$, may be different from both $\Phi_x$ and $\Phi_y$. The expansion has $m'\cdot n'$ terms, which we can subsequently approximate more compactly in the form \eqref{rs-app}, ready for the next operation. Note that \eqref{eq:algebra-estimator} contains \eqref{eq:estimator} as a special case.

One of the advantages of our approach is that \eqref{eq:estimator} and \eqref{eq:algebra-estimator} apply for general data types. In other words, $\Xcal,\Ycal,\Zcal$ need not be vector spaces --- they may be arbitrary nonempty sets, as long as positive definite kernels can be defined on them.

% In which case, the function $f$ may represent a non-trivial operation that operates directly on complex data structure. For example, [... protein binding ...] Another ubiquitous data is a functional data.

%\subsection{Consistency of the Estimator}

\paragraph{Convergence analysis in an idealized setting}
We analyze the convergence of \eqref{eq:algebra-estimator} under the assumption that the expansion points are actually samples $x_1,\dots,x_m$ from $X$ and $y_1,\dots,y_n$ from $Y$, which is for instance the case if the expansions \eqref{eq:mean-estimators} are the result of reduced set selection methods (cf.\ Section~\ref{sec:rs}). Moreover, we assume that the expansion coefficients $\alpha_1,\dots,\alpha_m$ and $\beta_1,\dots,\beta_n$ are constants, i.e., independent of the samples. %\bscomment{Kenji, do you agree? Rewrite if necessary.}

The following proposition gives a sufficient condition for the approximations in \eqref{eq:mean-estimators} to converge. Note that below, the coefficients $\alpha_1,\dots,\alpha_m$ depend on the sample size $m$, but for simplicity we refrain from writing them as $\alpha_{1,m},\dots,\alpha_{m,m}$; likewise, for $\beta_1,\dots,\beta_n$. We make this remark to ensure that readers are not puzzled by the below statement that $\sum_{i=1}^m \alpha_i^2 \to 0$ as $m\to\infty$.
\begin{proposition}\label{prop}
Let $x_1,\ldots,x_m$ be an i.i.d.~sample and $(\alpha_i)_{i=1}^m$ be constants with $\sum_{i=1}^m \alpha_i=1$.  Assume $\E[k(X,X)]>\E[k(X,\tilde{X})]$, where $X$ and $\tilde{X}$ are independent copies of $x_i$.  Then, the convergence 
\[
	\E\left\|\sum_{i=1}^m\alpha_i\Phi(x_i)-\mu[X]\right\|^2  \to 0\qquad (m\to\infty)
\]
holds true if and only if $\sum_{i=1}^m \alpha_i^2 \to 0$ as $m\to\infty$.
\end{proposition}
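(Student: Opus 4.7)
The plan is to obtain an explicit closed-form expression for the error $\mathbb{E}\|\sum_i \alpha_i \Phi(x_i) - \mu[X]\|^2$ as a product of $\sum_i \alpha_i^2$ and a strictly positive constant, from which both directions of the equivalence follow immediately.

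First I would expand the squared RKHS norm using the reproducing property and linearity of expectation:
\[
\mathbb{E}\Bigl\|\sum_{i=1}^m \alpha_i \Phi(x_i) - \mu[X]\Bigr\|^2 = \sum_{i,j} \alpha_i \alpha_j\, \mathbb{E}[k(x_i,x_j)] - 2\sum_i \alpha_i\, \mathbb{E}\dpl\Phi(x_i),\mu[X]\dpr + \|\mu[X]\|^2.
\]
Because the $x_i$ are i.i.d., $\mathbb{E}[k(x_i,x_j)] = \mathbb{E}[k(X,\tilde X)]$ whenever $i \neq j$ and equals $\mathbb{E}[k(X,X)]$ when $i = j$; likewise $\mathbb{E}\dpl\Phi(x_i),\mu[X]\dpr = \|\mu[X]\|^2 = \mathbb{E}[k(X,\tilde X)]$.

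Substituting these identities and writing $Q := \sum_i \alpha_i^2$, I would split $\sum_{i,j}\alpha_i\alpha_j = Q + \sum_{i\ne j}\alpha_i\alpha_j$ and use $\sum_i \alpha_i = 1$ to rewrite $\sum_{i\ne j}\alpha_i\alpha_j = 1 - Q$. The cross term $-2\sum_i\alpha_i\,\mathbb{E}[k(X,\tilde X)] = -2\mathbb{E}[k(X,\tilde X)]$ and the last term $+\mathbb{E}[k(X,\tilde X)]$ combine with the $(1-Q)\,\mathbb{E}[k(X,\tilde X)]$ contribution to cancel the $\mathbb{E}[k(X,\tilde X)]$ pieces except for a factor of $-Q$. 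After the arithmetic collapses, the expression reduces to
\[
\mathbb{E}\Bigl\|\sum_{i=1}^m \alpha_i \Phi(x_i) - \mu[X]\Bigr\|^2 = \Bigl(\sum_{i=1}^m \alpha_i^2\Bigr)\bigl(\mathbb{E}[k(X,X)] - \mathbb{E}[k(X,\tilde X)]\bigr).
\]

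Given the hypothesis $\mathbb{E}[k(X,X)] > \mathbb{E}[k(X,\tilde X)]$, the second factor is a strictly positive constant independent of $m$, so the left-hand side tends to zero if and only if $\sum_i \alpha_i^2 \to 0$, yielding both directions of the biconditional. There is no real obstacle here; the only point requiring a small amount of care is keeping track of the diagonal contribution when splitting $\sum_{i,j}$ into $i=j$ and $i\ne j$, which is precisely where the hypothesis $\mathbb{E}[k(X,X)] > \mathbb{E}[k(X,\tilde X)]$ becomes visible and ensures the factor multiplying $Q$ does not vanish, so that the ``only if'' direction is nontrivial.
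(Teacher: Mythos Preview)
Your proof is correct and follows essentially the same approach as the paper: both expand the squared RKHS norm, separate the diagonal and off-diagonal contributions using the i.i.d.\ assumption, and reduce the expression to $\bigl(\sum_i\alpha_i^2\bigr)\bigl(\E[k(X,X)]-\E[k(X,\tilde X)]\bigr)$, from which the equivalence is immediate. The paper's intermediate expression carries an extra $\bigl(1-\sum_i\alpha_i\bigr)^2\E[k(X,\tilde X)]$ term before invoking $\sum_i\alpha_i=1$, but this is a cosmetic difference only.
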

\begin{proof}
From the expansion
\begin{align*}
& \E\left\|\sum_{i=1}^m\alpha_i\Phi(x_i)-\mu[X]\right\|^2 \\
= & \sum_{i,s=1}^m \alpha_i\alpha_s \E[k(x_i,x_s)]-2\sum_{i=1}^m \alpha_i \E[k(x_i,X)] + \E[k(X,\tilde{X})] \\
= & \Bigl(1-\sum_i\alpha_i\Bigr)^2 \E[k(X,\tilde{X})] 
+ \Bigl( \sum_i\alpha_i^2\Bigr)\Bigl\{ \E[k(X,X)]-\E[k(X,\tilde{X})]
\Bigr\}, 
\end{align*}
the assertion is straightforward.
\end{proof}

The next result shows that if our approximations \eqref{eq:mean-estimators} converge in the sense of Proposition~\ref{prop}, then the estimator \eqref{eq:algebra-estimator} (with expansion coefficients summing to $1$) is consistent.

\begin{theorem}
  \label{thm:consistency2}
Let $x_1,\ldots,x_m$ and $y_1,\ldots,y_n$ be mutually independent i.i.d.~samples, and the constants $(\alpha_i)_{i=1}^m,(\beta_j)_{j=1}^n$ satisfy $\sum_{i=1}^m\alpha_i=\sum_{j=1}^n\beta_j=1$. Assume $\sum_{i=1}^m \alpha_i^2$ and $\sum_{j=1}^n \beta_j^2$ converge to zero 
as $n,m\to\infty$.  Then 
\[
\left\|\sum_{i=1}^m\sum_{j=1}^n\alpha_i\beta_j \Phi(f(x_i,y_j))-\mu[f(X,Y)]\right\| = O_p\left(\sqrt{\sum_i \alpha_i^2}+\sqrt{\sum_j\beta_j^2}\right)
\]
as $m,n\to\infty$.
\end{theorem}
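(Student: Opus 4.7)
The plan is to reduce the double-sum error to three pieces via a Hoeffding-style (or ANOVA-style) decomposition, so that two of the pieces are handled by the one-variable Proposition and the cross piece is a doubly-centered remainder that turns out to be of strictly smaller order. Concretely, writing $T_{ij}:=\Phi(f(x_i,y_j))$, $g(x):=\E_Y[\Phi(f(x,Y))]$, $h(y):=\E_X[\Phi(f(X,y))]$, and $\mu:=\mu[f(X,Y)]$, I would use the identity
\begin{equation*}
T_{ij}-\mu \;=\; \bigl(g(x_i)-\mu\bigr) + \bigl(h(y_j)-\mu\bigr) + U_{ij},\qquad U_{ij}:=T_{ij}-g(x_i)-h(y_j)+\mu.
\end{equation*}
Multiplying by $\alpha_i\beta_j$, summing, and using $\sum_i\alpha_i=\sum_j\beta_j=1$ to eliminate the Kronecker structure, I obtain
\begin{equation*}
\sum_{i,j}\alpha_i\beta_j T_{ij}-\mu \;=\; \sum_i \alpha_i\bigl(g(x_i)-\mu\bigr) + \sum_j \beta_j\bigl(h(y_j)-\mu\bigr) + \sum_{i,j}\alpha_i\beta_j U_{ij}.
\end{equation*}

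For the first two terms, I would apply Proposition~\ref{prop} to the RKHS-valued i.i.d.\ samples $g(x_1),\dots,g(x_m)$ and $h(y_1),\dots,h(y_n)$ (using that $\|g(x)\|,\|h(y)\|$ are bounded via Jensen and $\sup_x\sqrt{k(x,x)}$, so the corresponding boundedness/integrability conditions hold), yielding that their squared RKHS norms have expectations on the order of $\sum_i\alpha_i^2$ and $\sum_j\beta_j^2$ respectively. Markov's inequality then converts these into the desired $O_p(\sqrt{\sum\alpha_i^2})$ and $O_p(\sqrt{\sum\beta_j^2})$ bounds.

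The key step is to show that the remainder $R:=\sum_{i,j}\alpha_i\beta_j U_{ij}$ is of strictly smaller order. I would compute $\E\|R\|^2=\sum_{i,j,s,t}\alpha_i\alpha_s\beta_j\beta_t\,\E\langle U_{ij},U_{st}\rangle$ and use the crucial cancellation $\E[U_{ij}\mid x_i]=0$ and $\E[U_{ij}\mid y_j]=0$ (both following directly from the definitions of $g$ and $h$). By conditioning on the appropriate variable, $\E\langle U_{ij},U_{st}\rangle$ vanishes whenever $i\neq s$ or $j\neq t$, so only the diagonal $i=s,\,j=t$ survives, giving $\E\|R\|^2=\bigl(\sum_i\alpha_i^2\bigr)\bigl(\sum_j\beta_j^2\bigr)\cdot C$ for a finite constant $C=\sup_{i,j}\E\|U_{ij}\|^2$ bounded via the kernel. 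This is the product, not the sum, of the two small quantities, hence negligible compared to the first two terms.

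The main obstacle, and the part I would be most careful about, is justifying the clean vanishing in the cross-term calculation: it is tempting but wrong to argue via independence between $U_{ij}$ and $U_{st}$ when indices partially overlap, and one really needs the centering identities $\E[U_{ij}\mid x_i]=\E[U_{ij}\mid y_j]=0$ together with the conditional independence of $U_{ij}$ and $U_{it}$ given $x_i$ (and symmetrically). Once that is in place, combining the three bounds via the triangle inequality yields the stated rate $O_p(\sqrt{\sum_i\alpha_i^2}+\sqrt{\sum_j\beta_j^2})$, and specializing to $\alpha_i\equiv 1/m$, $\beta_j\equiv 1/n$ immediately recovers the $O_p(m^{-1/2}+n^{-1/2})$ claim of Theorem~\ref{thm:consistency} that was deferred above.
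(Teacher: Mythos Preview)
Your proposal is correct and takes a genuinely different route from the paper's own argument. The paper proceeds by brute-force expansion of
\[
\E\Bigl\|\sum_{i,j}\alpha_i\beta_j\Phi(f(x_i,y_j))-\mu\Bigr\|^2
\]
into a quadruple sum over $(i,j,s,t)$, partitions the terms according to the index-coincidence pattern ($i=s$ or not, $j=t$ or not), replaces each block by the corresponding expectation $\E[k(f(X,Y),f(\tilde X,\tilde Y))]$, $\E[k(f(X,Y),f(X,\tilde Y))]$, etc., and then simplifies the resulting polynomial in $\sum_i\alpha_i$, $\sum_i\alpha_i^2$, $\sum_j\beta_j$, $\sum_j\beta_j^2$ using $\sum_i\alpha_i=\sum_j\beta_j=1$. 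This yields directly that the expected squared norm is $O(\sum_i\alpha_i^2+\sum_j\beta_j^2)$, after which Chebyshev gives the $O_p$ statement.

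Your Hoeffding/ANOVA decomposition $T_{ij}-\mu=(g(x_i)-\mu)+(h(y_j)-\mu)+U_{ij}$ is more structural: it isolates the two ``main effects'' to which (the computation behind) Proposition~\ref{prop} applies verbatim, and pushes all the combinatorics into the degenerate interaction term, where the centering $\E[U_{ij}\mid x_i]=\E[U_{ij}\mid y_j]=0$ kills every off-diagonal contribution in one stroke. This makes transparent \emph{why} the rate is $\sqrt{\sum\alpha_i^2}+\sqrt{\sum\beta_j^2}$ (the main effects dominate; the interaction is the product and hence of smaller order), and it scales cleanly to $p$ variables as in~\eqref{eq:ustat}. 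The paper's approach, by contrast, needs no Bochner-integral or conditional-expectation machinery---just index bookkeeping---so it is slightly more elementary but less illuminating. Two small remarks: Proposition~\ref{prop} is literally stated for $\Phi(x_i)$ rather than for the RKHS-valued samples $g(x_i)$, so you are really reusing its proof, not its statement (as you note); and you do not need a uniform kernel bound, only finiteness of $\E[k(f(X,Y),f(X,Y))]$, which is the same implicit moment assumption the paper uses.
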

\begin{proof}
By expanding and taking expectations, one can see that
\[
\E\left\| \sum_{i=1}^m\sum_{j=1}^n \alpha_i\beta_j \Phi(f(x_i,y_j)) - \E[\Phi(f(X,Y))]\right\|^2 
\]
equals
\begin{align*}
& \sum_{i=1}^m\sum_{j=1}^n \alpha_i^2 \beta_j^2 \E[k(f(X,Y), f(X,Y))]+\sum_{s\neq i}\sum_j\alpha_i\alpha_s\beta_j^2\E[k(f(X,Y), f(\tilde{X},Y))] \\
& \quad  +\sum_{i}\sum_{t\neq j}\alpha_i^2\beta_j\beta_t\E[k(f(X,Y), f(X,\tilde{Y}))]  \\
& \quad +\sum_{s\neq i}\sum_{t\neq j}\alpha_i\alpha_s\beta_j\beta_t\E[k(f(X,Y), f(\tilde{X},\tilde{Y}))] \\
& \quad -2 \sum_i\sum_j\alpha_i\beta_j \E[k(f(X,Y), f(\tilde{X},\tilde{Y}))] 
+\E[k(f(X,Y), f(\tilde{X},\tilde{Y}))] \\
%\textcolor{red}{
= & \Bigl(\sum_i\alpha_i^2\Bigr)\Bigl(\sum_j\beta_j^2\Bigr)\E[k(f(X,Y), f(X,Y))] \\
& \quad + \left\{ \Bigl(1-\sum_i\alpha_i\sum_j\beta_j\Bigr)^2
+ \sum_i\alpha_i^2\sum_j\beta_j^2 \right. \\
& \qquad\qquad \left.- \sum_i\alpha_i^2(\sum_j\beta_j)^2 - (\sum_i\alpha_i)^2\sum_j\beta_j^2  \right\}\E[k(f(X,Y), f(\tilde{X},\tilde{Y}))] \\
& \quad + \Bigl((\sum_i\alpha_i)^2-\sum_i\alpha_i^2\Bigr)\Bigl(\sum_j\beta_j^2\Bigr)\E[k(f(X,Y), f(\tilde{X},Y))] \\
& \quad + \Bigl(\sum_i\alpha_i^2\Bigr)\Bigl((\sum_j\beta_j)^2-\sum_j\beta_j^2\Bigr)\E[k(f(X,Y), f(X,\tilde{Y}))] \\
= & 
\Bigl(\sum_i\alpha_i^2\Bigr)\Bigl(\sum_j\beta_j^2\Bigr)\E[k(f(X,Y), f(X,Y))] \\
& \quad + \left\{ \sum_i\alpha_i^2\sum_j\beta_j^2 - \sum_i\alpha_i^2 - \sum_j\beta_j^2  \right\}\E[k(f(X,Y), f(\tilde{X},\tilde{Y}))] \\
& \quad + \Bigl(1-\sum_i\alpha_i^2\Bigr)\Bigl(\sum_j\beta_j^2\Bigr)\E[k(f(X,Y), f(\tilde{X},Y))] \\
& \quad + \Bigl(\sum_i\alpha_i^2\Bigr)\Bigl(1-\sum_j\beta_j^2\Bigr)\E[k(f(X,Y), f(X,\tilde{Y}))],
%}
\end{align*}
which implies the norm in the assertion of the theorem converges to zero at\\ $O_p\left(\sqrt{\sum_i \alpha_i^2}+\sqrt{\sum_j\beta_j^2}\right)$
under the assumptions on $\alpha_i$ and $\beta_j$.  Here $(\tilde{X},\tilde{Y})$ is an independent copy of $(X,Y)$.  This concludes the proof.
\end{proof}

Note that in the simplest case, where $\alpha_i=1/m$ and $\beta_j=1/n$, we have $\sum_i\alpha_i^2=1/m$ and $\sum_j\beta_j^2 = 1/n$, which proves Theorem \ref{thm:consistency}.  It is also easy to see from the proof that we do not strictly need $\sum_i\alpha_i=\sum_j\beta_j=1$ --- for the estimator to be consistent, it suffices if the sums converge to $1$. For a sufficient condition for this convergence, see \cite{KagFuk14}.

%\bscomment{1) I think this may be provable by constructing a distribution on the set of training points, with mass $\alpha_i$, $\beta_j$, at least for the case where those coefficients are positive. I.e.: first approximate $p_X$ by a weighted distribution (whose RKHS image differs by at most $\epsilon$), then the same for $p_y$ (with $\epsilon'$), and then apply Theorem~\ref{thm:consistency} to argue that for the weighted distributions, \eqref{eq:algebra-estimator} is correct. Probably we also need some form of continuity, of $\Phi$ or its inverse.

%2) instead of a weighted sample, we can use samples with repeated points, where the multiplicities are computed e.g.\ using the decimal expansions of the $\alpha_i,\beta_j$. But this way we introduce dependencies in the samples. But I assume the proof of Theorem~\ref{thm:consistency} would still work as long as all coefficients are bounded away from 0 and 1.

%3) An alternative approach would be to first prove that the kernel mean map for weighted samples of pairwise distinct points is injective, too. Next, we translate \eqref{eq:mean-estimators-a} and \eqref{eq:mean-estimators-b} into weighted samples. }

\paragraph{More general expansion sets} To conclude our discussion of the estimator \eqref{eq:algebra-estimator}, we turn to the case where the expansions \eqref{eq:mean-estimators} are computed by reduced set construction, i.e., they are not necessarily expressed in terms of samples from $X$ and $Y$. This is more difficult, and we do not provide a formal result, but just a qualitative discussion.

To this end, suppose the approximations \eqref{eq:mean-estimators} satisfy
\begin{align}
  \label{eq:mean-estimators-a}
\sum_{i=1}^{m'}\alpha_i=1 & \mbox{~and for all~}i, \alpha_i>0,
\\
  \label{eq:mean-estimators-b}
\sum_{j=1}^{n'}\beta_j=1 & \mbox{~and for all~}j, \beta_j>0,
\end{align}
and we approximate $\mu[f(X,Y)]$ by the quantity \eqref{eq:algebra-estimator}.

We assume that \eqref{eq:mean-estimators} are good approximations of the kernel means of two unknown random variables $X$ and $Y$; we also assume that $f$ and the kernel mean map along with its inverse are continuous. We have no samples from $X$ and $Y$, but we can turn \eqref{eq:mean-estimators} into sample estimates based on {\em artificial} samples $\X$ and $\Y$, for which we can then appeal to our estimator from Theorem~\ref{thm:consistency}.

To this end, denote by $\X' =( x'_1,\ldots,x'_{m'})$ and $\Y'=(y'_1,\ldots,y'_{n'})$ the expansion points in \eqref{eq:mean-estimators}. We construct a sample $\X =( x_1,x_2,\dots )$ whose kernel mean is close to $\sum_{i=1}^{m'}\alpha_i\Phi_x(x'_i)$ as follows: for each $i$, the point $x'_i$ appears in $\X$ with multiplicity $\lfloor m\cdot \alpha_i \rfloor$, i.e., the largest integer not exceeding $m\cdot \alpha_i$. This leads to a sample of size at most $m$. Note, moreover, that the multiplicity of $x'_i$, divided by $m$, differs from $\alpha_i$ by at most $1/m$, so effectively we have quantized the $\alpha_i$ coefficients to this accuracy.

Since $m'$ is constant, this implies that for any $\epsilon>0$, we can choose $m$ large enough to ensure that
\begin{equation}
\left\| \frac{1}{m}\sum_{1=1}^m \Phi_x(x_i) - \sum_{i=1}^{m'}\alpha_i\Phi_x(x'_i)\right\|^2 < \epsilon.
\end{equation}

We may thus work with $\frac{1}{m}\sum_{1=1}^m \Phi_x(x_i)$, which for strictly positive definite kernels corresponds uniquely to the sample $\X =( x_1,\ldots,x_m)$. %\bscomment{need to argue that the inverse of $\mu_x$ is continuous?} 
By the same argument, we obtain a sample $\Y=(y_1,\ldots,y_n)$ approximating the second expansion. Substituting both samples into the estimator from Theorem~\ref{thm:consistency} leads to
\begin{equation}
  \label{eq:algebra-estimator-hat}
  \hat{\mu}[f(X,Y)] = \frac{1}{\sum_{i=1}^{m'}\hat{\alpha}_i\sum_{j=1}^{n'}\hat{\beta_j}}\sum_{i=1}^{m'}\sum_{j=1}^{n'} \hat{\alpha}_i\hat{\beta}_j \Phi_z(f(x'_i,y'_j)),
\end{equation}
where $\hat{\alpha}_i =  \lfloor m\cdot \alpha_i \rfloor / m$, and  $\hat{\beta}_i =  \lfloor n\cdot \beta_i \rfloor / n$.
%\bscomment{I guess we should add something like: To appeal to Theorem~\ref{thm:consistency} as a justification of this estimator, we need to argue that although there is some dependence in the constructed samples $\X,\Y$, the variance nevertheless goes to 0. Kenji: can you think about this, does that make sense?}
By choosing sufficiently large $m,n$, this becomes an arbitrarily good approximation (in the RKHS norm) of the proposed estimator \eqref{eq:algebra-estimator}. Note, however, that we cannot claim based on this argument that this estimator is consistent, not the least since Theorem~\ref{thm:consistency} in the stated form requires i.i.d.\ samples.

\paragraph{Larger sets of random variables}
Without analysis, we include the estimator for the case of more than two variables:
Let $g$ be a measurable function of jointly independent RVs $U_j (j=1,\dots,p)$.
Given i.i.d.\ observations $u^j_1,\dots,u^j_m \sim U_j$, we have
\begin{equation} \label{eq:ustat}
\frac{1}{m^p} \sum_{m_1,\dots,m_p=1}^m \Phi\left(g (u^1_{m_1},\dots,u^p_{m_p})\right)
\xrightarrow{m\to\infty}
\mu\left[g(U_1,\dots,U_p )\right] .
\end{equation}
in probability.
Here, in order to keep notation simple, we have assumed that the samples sizes for each RV are identical.

%It is straightforward to generalize \eqref{eq:algebra-estimator} to the case of more than two variables as well.

As above, we note that (i) $g$ need not be real-valued, it can take values in some set $\Zcal$ for which we have a (possibly characteristic) positive definite kernel; (ii) we can extend this to general kernel expansions like \eqref{eq:algebra-estimator}; and (iii) if we use Gaussian kernels with width tending to 0, we can think of the above as a sampling method.

\section{Dependent RVs and Structural Equation Models}
\label{sec:dependent-rvs}

For dependent RVs, the proposed estimators are not applicable. One way to handle dependent RVs is to appeal to the fact that any joint distribution of random variables can be written as a structural equation model with independent noises.
This leads to an interesting application of our method to the field of causal inference.

%\subsection{Mapping Structural Equation Models}
We consider a model $X_i = f_i ( \mbox{PA}_i , U_i )$, for $i=1,\dots,p$, with jointly independent noise terms $U_1,\dots,U_p$. Such models arise for instance in causal inference \cite{Pearl2009}.
%\item this model can be shown to satisfy Reichenbach's principle
%\item Independence of noises follows from causal sufficiency (by Reichenbach)
Each random variable $X_i$ is computed as a function $f_i$ of its noise term $U_i$ and its parents $\mbox{PA}_i$ in an underlying directed acyclic graph (DAG). Every graphical model w.r.t.\ a DAG can be expressed as such a structural equation model with suitable functions and noise terms (e.g., \cite{Peters2014anm}).

If we recursively substitute the parent equations, we can express each $X_i$ as a function of only the independent noise terms $U_1,\dots,U_p$,
\begin{equation}
X_i = g_i (U_1,\dots,U_p ).
\end{equation}
Since we know how to compute functions of independent RVs, we can try to test such a model (assuming knowledge of all involved quantities) by estimating the distance between RKHS images,
\begin{equation}
\Delta = \| \mu[X_i] - \mu[g_i (U_1,\dots,U_p )] \|^2
\end{equation}
using the estimator described in~\eqref{eq:ustat} (we discuss the bivariate case in Theorem~\ref{thm:additive-asymmetry}).
It may be unrealistic to assume we have access to all quantities. However, there is a special case where this is conveivable, which we will presently discuss. This is the case of additive noise models \cite{Peters2014anm}
\begin{equation}\label{eq:additive}
Y = f(X)+U, \qquad \mbox{~with~} X\independent U.
\end{equation}
Such models are of interest for cause-effect inference since it is known \cite{Peters2014anm} that in the generic case, a model \eqref{eq:additive} can only be fit in one direction, i.e., if \eqref{eq:additive} holds true, then we cannot simultaneously have
\begin{equation}\label{eq:additive2}
X = g(Y)+V, \qquad \mbox{~with~} Y\independent V.
\end{equation}

%%%%%%%%%%%%%%%%%%%%%%%%%%%%%%%%%%%%%%%%%%%%%%%%%%$$$$$$$$$ JONAS' VERSION
To measure how well \eqref{eq:additive} fits the data, we define an estimator
\begin{equation}\label{causal-delta}
\Delta_{emp}:=\left\| \frac{1}{m}\sum_{i=1}^m \Phi(y_i) - \frac{1}{m^2}\sum_{i,j=1}^m \Phi(f(x_i)+u_j)\right\|^2.
\end{equation}
Analogously, we define the estimator in the backward direction
\begin{equation}\label{causal-delta-bw}
\Delta_{emp}^{bw}:=\left\| \frac{1}{m}\sum_{i=1}^m \Phi(x_i) - \frac{1}{m^2}\sum_{i,j=1}^m \Phi(g(y_i)+v_j)\right\|^2.
\end{equation}
Here, we assume that we are given the conditional mean functions $f: x \mapsto \E[Y\,|\,X=x]$ and $g: y \mapsto \E[X\,|\,Y=y]$.

In practice, we would apply the following procedure: we are given a sample $(x_1,y_1),$ $\dots,(x_m,y_m)$. We estimate the function $f$ as well as the residual noise terms $u_1,\dots,u_m$ by regression, and likewise for the backward function $g$ \cite{Peters2014anm}. Strictly speaking, we need to use separate subsamples to estimate function and noise terms, respectively, see \cite{Kpotufe_etal_2014}.

Below, we show that $\Delta_{emp}$ converges to $0$ for additive noise models \eqref{eq:additive}. For the incorrect model \eqref{eq:additive2}, however, $\Delta_{emp}^{bw}$ will in the generic case not converge to zero. %\footnote{At present, we have no formal result to this effect. We conjecture that under a suitable genericity condition, akin to the ``faithfulness'' condition used in causal inference \cite{Pearl2009,Spirtes2000}, the limit will provably be nonzero for those cases where the causal direction is identifiable \cite{Peters2014anm}.}
We can thus use the comparison of both values for deciding causal direction.
\begin{theorem}
  \label{thm:additive-asymmetry}
Suppose $x_1,\ldots,x_m$ and $u_1,\ldots,u_m$ are mutually independent i.i.d.\ samples, and $y_i=f(x_i) + u_i$. Assume further that the direction of the additive noise model is identifiable~\cite{Peters2014anm} and the kernel for $x$ is characteristic. We then have
\begin{align}
\Delta_{emp} &\to 0 \qquad \text{ and} \label{eqq1}\\
\Delta_{emp}^{bw} &\not \to 0 \label{eqq2}
\end{align}
in probability as $m\to\infty$.
\end{theorem}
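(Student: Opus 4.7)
The plan is to prove the two limit statements separately, leveraging the convergence results already established.

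For \eqref{eqq1}, I would apply the triangle inequality to bound $\sqrt{\Delta_{emp}}$ by the sum of $\|\tfrac{1}{m}\sum_{i}\Phi(y_i)-\mu[Y]\|$ and $\|\tfrac{1}{m^2}\sum_{i,j}\Phi(f(x_i)+u_j)-\mu[f(X)+U]\|$. The first piece is $O_p(1/\sqrt{m})$ by Theorem~\ref{th:convergence}. The second piece is precisely the two-sample estimator of Theorem~\ref{thm:consistency} applied to the mutually independent i.i.d.\ samples $(x_i)$ and $(u_j)$ with the measurable map $(x,u)\mapsto f(x)+u$, so it converges in probability to $\mu[f(X)+U]$. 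Since the ANM relation $Y=f(X)+U$ forces $\mu[Y]=\mu[f(X)+U]$, the two RKHS limits coincide and $\Delta_{emp}\to 0$ in probability.

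For \eqref{eqq2}, the first term again converges to $\mu[X]$, so the real task is to identify the limit of $B_m:=\tfrac{1}{m^2}\sum_{i,j}\Phi(g(y_i)+v_j)$ and to show that it differs from $\mu[X]$. The delicate point is that $y_i$ and $v_i$ both depend on the same noise pair $(x_i,u_i)$, so Theorem~\ref{thm:consistency} does not apply verbatim. I would split $B_m$ into its diagonal and off-diagonal parts. On the diagonal, $v_i=x_i-g(y_i)$ gives $g(y_i)+v_i=x_i$, so the contribution is $\tfrac{1}{m^2}\sum_i\Phi(x_i)$, whose RKHS norm is $O(1/m)$ for a bounded kernel. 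For $i\neq j$, the pair $(y_i,v_j)$ is genuinely independent because it is built from disjoint noise pairs $(x_i,u_i)$ and $(x_j,u_j)$, so the off-diagonal double sum is an RKHS-valued two-sample U-statistic over independently indexed replicates. A direct second-moment computation, exactly in the style of the proof of Theorem~\ref{thm:consistency2}, then shows that this piece converges in probability to $\mu[g(Y')+V']$, where $Y'$ and $V'$ are independent copies of the marginals of $Y$ and $V:=X-g(Y)$.

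To conclude, I would invoke the characteristic-kernel hypothesis on the $x$-space. Equality $\mu[X]=\mu[g(Y')+V']$ would force $X\stackrel{d}{=}g(Y')+V'$; since $Y'\independent V'$ by construction, this distributional identity exhibits an additive-noise decomposition of $X$ on $Y$ with independent noise, which is ruled out by the identifiability assumption of \cite{Peters2014anm}. Hence $\|\mu[X]-\mu[g(Y')+V']\|>0$, and $\Delta_{emp}^{bw}$ converges in probability to this strictly positive constant, proving \eqref{eqq2}.

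The main obstacle I expect is precisely this last step: the identifiability result in \cite{Peters2014anm} is phrased in terms of the true joint distribution of $(X,Y)$, whereas what the convergence analysis hands us is only a marginal identity $X\stackrel{d}{=}g(Y')+V'$ with independently coupled $(Y',V')$. One has to argue carefully that, under the generic identifiability used throughout \cite{Peters2014anm}, the existence of any such independent pair with the correct pushforward law is already incompatible with the forward ANM; the intermediate step $(y,v)\mapsto g(y)+v$ together with $\E[V\mid Y]=0$ and the assumed non-independence of $V$ and $Y$ is what makes this go through, but it is the place where the ``generic'' caveat of identifiability really enters and deserves explicit discussion.
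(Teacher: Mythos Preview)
Your approach coincides with the paper's: it too handles \eqref{eqq1} by the triangle inequality plus Theorem~\ref{thm:consistency}, and for \eqref{eqq2} decomposes the double sum into diagonal and off-diagonal parts (the paper reindexes this via cyclic shifts $j=i+k\bmod m$, which is the same split), obtains $\mu[X]=\mu[g(Y)+\tilde V]$ with $\tilde V\independent Y$, and invokes identifiability together with the characteristic-kernel assumption to derive a contradiction. The subtlety you raise in your last paragraph---that the derived identity is only about the marginal $p_X$ rather than the joint $p_{XY}$---is not addressed in the paper's proof either; it simply asserts that $X\stackrel{d}{=}g(Y)+\tilde V$ with $Y\independent\tilde V$ ``contradicts the identifiability of the additive noise model'' without further elaboration.
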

\begin{proof}
Equation~\eqref{eqq1} follows from Theorem~\ref{thm:consistency} since
$\|\frac{1}{m}\sum_{i=1}^m \Phi(y_i) - \mu[Y]\| \to 0$ and
$
\|\frac{1}{m^2}\sum_{i,j=1}^{m^2} \Phi(f(x_i)+u_j) - \mu[Y]\| \to 0
$ (all convergences in this proof are in probability).

To prove \eqref{eqq2}, we assume that $\Delta_{emp}^{bw} \to 0$ which implies
\begin{equation} \label{eqqqq}
\left\| \frac{1}{m^2}\sum_{i,j=1}^m \Phi(g(y_i)+v_j) - \mu[X] \right\| \to 0.
\end{equation}
The key idea is to introduce a random variable $\tilde V$ that has the same distribution as $V$ but is independent of $Y$ and to consider the following decomposition of the sum appearing in~\eqref{eqqqq}:
\begin{align*}
\frac{1}{m^2}\sum_{i,j=1}^m \Phi(g(y_i)+v_j)
& =
\frac{1}{m^2} \sum_{i=1}^m \Phi(g(y_i)+v_i) \\
&\qquad \qquad +
\frac{1}{m^2} \sum_{i=1}^m \sum_{k=1}^{m-1} \Phi(g(y_i)+v_{i+k})
\\
& =
\frac{1}{m} \frac{1}{m}\sum_{i=1}^m \Phi(x_i) \\
&\qquad \qquad +
\frac{1}{m} \sum_{k=1}^{m-1} \frac{1}{m} \sum_{i=1}^m \Phi(g(y_i)+v_{i+k}) \\
&=: A_m + B_m,
\end{align*}
where the index for $v$ is interpreted modulo $m$, for instance, $v_{m+3} := v_3$.
Since $v_{i+k}=x_{i+k}-g(y_{i+k})$ is independent of $y_i$, 
it further follows from Theorem~\ref{thm:consistency} that
$\|A_m - \frac{1}{m}\mu[X]\| \to 0$  and
$\|B_m - \frac{m-1}{m}\mu[g(Y)+\tilde V]\| \to 0$. Therefore,
$$
\left\|A_m + B_m  - \frac{1}{m}\mu[X] - \frac{m-1}{m}\mu[g(Y)+\tilde V] \right\| \to 0.
$$
Together with~\eqref{eqqqq} this implies
$$
\left\|\mu[X] - \frac{1}{m}\mu[X] - \frac{m-1}{m}\mu[g(Y)+\tilde V] \right\| \to 0
$$
and therefore
$$
 \mu[g(Y)+\tilde V] = \mu[X].
$$
Since the kernel is characteristic, this implies
$$
g(Y) + \tilde V = X \quad \text{(in distribution),}
$$
with $Y\independent \tilde V$, which contradicts the identifiability of the additive noise model.
\end{proof}

As an aside, note that Theorem~\ref{thm:additive-asymmetry} would not hold if in~\eqref{causal-delta-bw} we were to estimate 
%$\mu[f(X)+U]$ by $\frac{1}{m} \sum_{i=1}^m \Phi(f(x_i)+u_i)$ %that is based on~\eqref{eq:empiricalmean} 
%instead of $\frac{1}{m^2} \sum_{i,j=1}^m \Phi(f(x_i)+u_j)$ and in~\eqref{causal-delta-bw} 
%we estimate 
$\mu[g(Y)+V]$ by $\frac{1}{m} \sum_{i=1}^m \Phi(g(y_i)+v_i)$ instead of 
$\frac{1}{m^2} \sum_{i,j=1}^m \Phi(g(y_i)+v_j)$.

%\bscomment{future work...:can we establish the link to the standard additive noise method, i.e., to independence testing between $X$ and $U$. --- maybe both are formally equavalent under some assumptions?}

%\kenjicomment{It is important to note that $f$ is deterministic or independent of the sample in the theorem.  If we estimate $f$ with the sample, $f(x_i)$ and $u_i$ are not necessarily independent.  In practice, maybe we can split the data into two: one for estimate $f$ and the other for the test statistics.}

%\section{Arithmetics for Conditional Variables}
%
%Kenji: can you include just the basic idea if you remember it? we had it on the black board last summer in Tuebingen.

\section{Experiments}

%(keep this very short - retain some of the stuff for NIPS..)

%In this section we provide numerical simulations that demonstrate the kernel-based approach for random variable arithmetics. The first experiment demonstrates the convergence of the proposed estimator on synthetic data, while the second one involves the application of our framework in causal discovery problem.

\subsection{Synthetic data}

We consider basic arithmetic expressions that involve multiplication $X \cdot Y$, division $X/Y$, and exponentiation $X^Y$ on two independent scalar RVs $X$ and $Y$. Letting $p_X = \mathcal{N}(3,0.5)$ and $p_Y = \mathcal{N}(4,0.5)$, we draw i.i.d.\ samples $\mathbf{X}=\{x_1,\ldots,x_m\}$ and $\mathbf{Y}=\{y_1,\ldots,y_m\}$ from $p_X$ and $p_Y$. 

In the experiment, we are interested in the convergence (in RKHS norm) of our estimators to $\mu[f(X,Y)]$. Since we do not have access to the latter, we use an independent sample to construct a proxy $\hat{\mu}[f(X,Y)] = (1/\ell^2)\sum_{i,j=1}^{\ell}\Phi_z(f(x_i,y_j))$. We found that $\ell=100$ led to a sufficiently good approximation. 

Next, we compare the three estimators, referred to as $\mu_1,\mu_2$ and $\mu_3$ below, for sample sizes $m$ ranging from 10 to 50:
\begin{enumerate}
\item The sample-based estimator \eqref{eq:estimator}
\item The estimator \eqref{eq:algebra-estimator} based on approximations of the kernel means, taking the form $\hat{\mu}[X] := \sum_{i=1}^{m'}\alpha_i\Phi_x(x_i)$ and $\hat{\mu}[Y] := \sum_{j=1}^{m'}\beta_j\Phi_y(y_j)$ of $\mu[X]$ and $\mu[Y]$, respectively. We used the simplest possible reduced set selection method: we randomly subsampled subsets of size $m'\approx 0.4\cdot m$ from $\mathbf{X}$ and $\mathbf{Y}$, and optimized the coefficients $\{\alpha_1,\ldots,\alpha_{m'}\}$ and $\{\beta_1,\ldots,\beta_{m'}\}$ to best approximate the original kernel means (based on $\mathbf{X}$ and $\mathbf{Y}$) in the RKHS norm \cite[Section 18.3]{SchSmo02}.
\item Analogously to the case of one variable~\eqref{eq:empiricalmean}, we may also look at the estimator $\hat{\mu}_3[\X,\Y] := (1/m)\sum_{i=1}^m\Phi_z(f(x_i,y_i))$, which sums only over $m$ mutually independent terms, i.e., a small fraction of all terms of \eqref{eq:estimator}.
\end{enumerate}
For $i=1,2,3$, we evaluate the estimates $\hat{\mu}_i[f(X,Y)]$ using the error measure
\begin{equation}
  \label{eq:loss}
  L = \left\| \hat{\mu}_i[f(X,Y)] - \hat{\mu}[f(X,Y)] \right\|^2.
\end{equation} 
%where $\hat{\mu}_i[f(X,Y)]$ denotes one of three estimators described above. 
We use \eqref{eq:featuremap} to evaluate $L$ in terms of kernels.
In all cases, we employ a Gaussian RBF kernel \eqref{eq:gaussian} whose bandwidth parameter is chosen using the median heuristic, setting $\sigma$ to the median of the pairwise distances of distinct data points \cite{GreBouSmoSch05}.

%To verify the convergence, we increase the sample size $m$. 
%Note that $m'$ will also increase, but the ratio between these two will be constant. 
%The reason we use $\hat{\mu}[f(X,Y)]$ in \eqref{eq:loss} instead of $\mu[f(X,Y)]$ is because we do not have access to $\mu[f(X,Y)]$ and, by virtue of Theorem \ref{thm:consistency}, $\hat{\mu}[f(X,Y)]$ is a good proxy of $\mu[f(X,Y)]$ and $\hat{\mu}[f(X,Y)] \rightarrow \mu[f(X,Y)]$ in probability as $\ell\rightarrow\infty$. 
Figure \ref{fig:synthetic-result} depicts the error \eqref{eq:loss} 
%averaged over 30 repeated experiments 
as a function of sample size $m$. For all operations, the error decreases as sample size increases. Note that $\Phi_z$ is different across the three operations, resulting in different scales of the average error in Figure \ref{fig:synthetic-result}.

%% experiment 1
\begin{figure*}[t!]
  \centering
  \includegraphics[width=\textwidth]{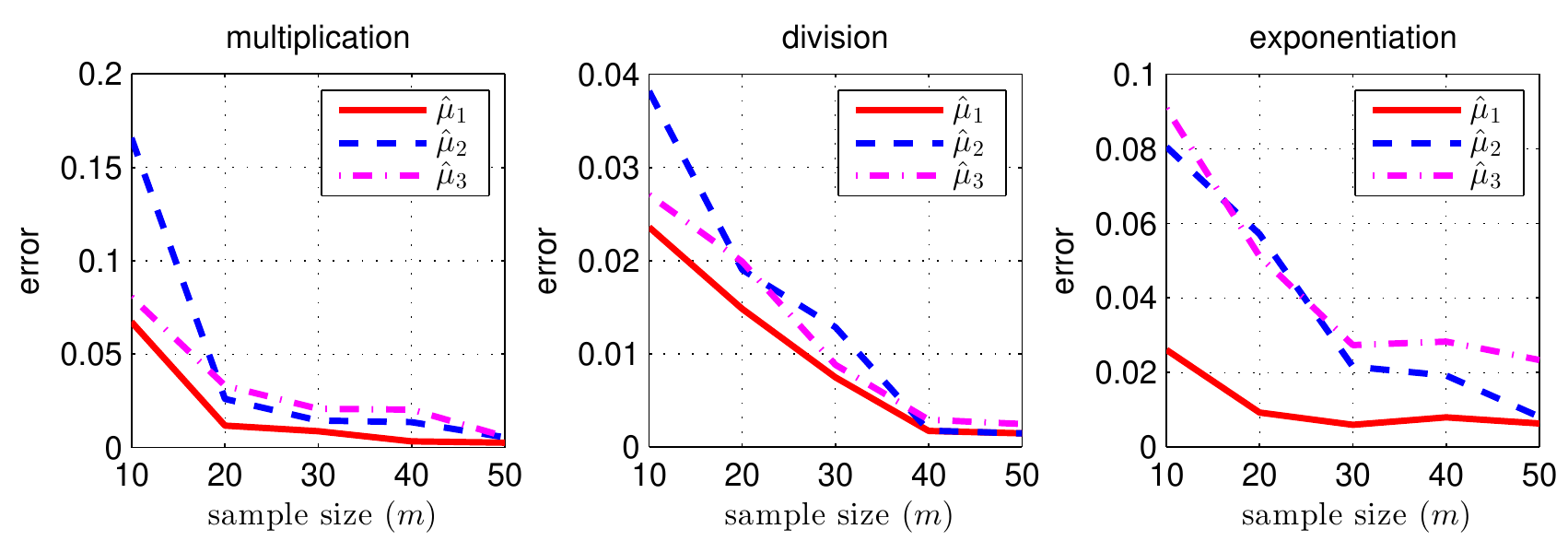}
  \caption{Error of the proposed estimators for three arithmetic operations ---  multiplication $X\cdot Y$, division $X/Y$, and exponentiation $X^Y$ --- as a function of sample size $m$. The error reported is an average of 30 repetitions of the simulations. The expensive estimator $\hat{\mu}_1$ (see \eqref{eq:estimator}) performs best. 
%  The estimators $\hat{\mu}_2$ (see~\eqref{eq:algebra-estimator}) and $\hat{\mu}_3$ (see~\eqref{eq:empiricalmean}) perform similarly but the former uses $60\%$ less data from $X$ and $Y$ compared to the former.
 The approximation  $\hat{\mu}_2$ (see \eqref{eq:algebra-estimator}) works well as sample sizes increase.
  }
  \label{fig:synthetic-result}
\end{figure*}

\subsection{Causal discovery via functions of kernel means}

We also apply our KPP approach to bivariate causal inference problem (cf. Section \ref{sec:dependent-rvs}). That is, given a pair of real-valued random variables $X$ and $Y$ with joint distribution $p_{XY}$, we are interested in identifying whether $X$ causes $Y$ (denote as $X\rightarrow Y$) or $Y$ causes $X$ (denote as $Y\rightarrow X$) based on observational data. We assume an additive noise model $E = f(C) + U$ with $C\independent U$ where $C,E,U$ denote cause, effect, and residual (or ``unexplained'') variable, respectively. Below we present a preliminary result on the \texttt{CauseEffectPairs} benchmark data set \cite{Mooij14:Benchmark}.

For each causal pair $(\mathbf{X},\mathbf{Y}) = \{(x_1,y_1),\ldots,(x_m,y_m)\}$, we estimate functions $y\approx f(x)$ and $x\approx g(y)$ as least-squares fits using degree 4 polynomials. We illustrate one example in Figure \ref{fig:tuebingen-pair}. 
Next, we compute the residuals in both directions as $u_i = y_i - f(x_i)$ and $v_j = x_j - g(y_j)$.\footnote{For simplicity, this was done using the same data; but cf.\ our discussion following \eqref{causal-delta}.} Finally, we compute scores $\Delta_{X\rightarrow Y}$ and $\Delta_{Y\rightarrow X}$ by
\begin{eqnarray*}
  \Delta_{X\rightarrow Y} &:=& \left\|\frac{1}{m}\sum_{i=1}^m\Phi(y_i) - \frac{1}{m^2}\sum_{i,j=1}^m\Phi(f(x_i) + u_j)\right\|^2 , \\
  \Delta_{Y\rightarrow X} &:=& \left\|\frac{1}{m}\sum_{i=1}^m\Phi(x_i) - \frac{1}{m^2}\sum_{i,j=1}^m\Phi(g(y_i) + v_j)\right\|^2 .
\end{eqnarray*}
Following Theorem \ref{thm:additive-asymmetry}, we can use the comparison between $\Delta_{X\rightarrow Y}$ and $\Delta_{Y\rightarrow X}$ to infer the causal direction. Specifically, we decide that $X\rightarrow Y$ if $\Delta_{X\rightarrow Y} < \Delta_{Y\rightarrow X}$, and that $Y\rightarrow X$ otherwise. In this experiment, we also use a Gaussian RBF kernel whose bandwidth parameter is chosen using the median heuristic. To speed up the computation of $\Delta_{X\rightarrow Y}$ and $\Delta_{Y\rightarrow X}$, we adopted a finite approximation of the feature map using 100 random Fourier features (see \cite{Rahimi07:RFF} for details).
We allow the method to abstain whenever the two values are closer than $\delta>0$. By increasing $\delta$, we can compute the method's accuracy as a function of a decision rate (i.e., the fraction of decisions that our method is forced to make) ranging from $100\%$ to $0\%$.

%% results on tuebingen pairs
\begin{figure}
  \centering
  \begin{subfigure}[b]{0.48\textwidth}
    \includegraphics[width=\textwidth]{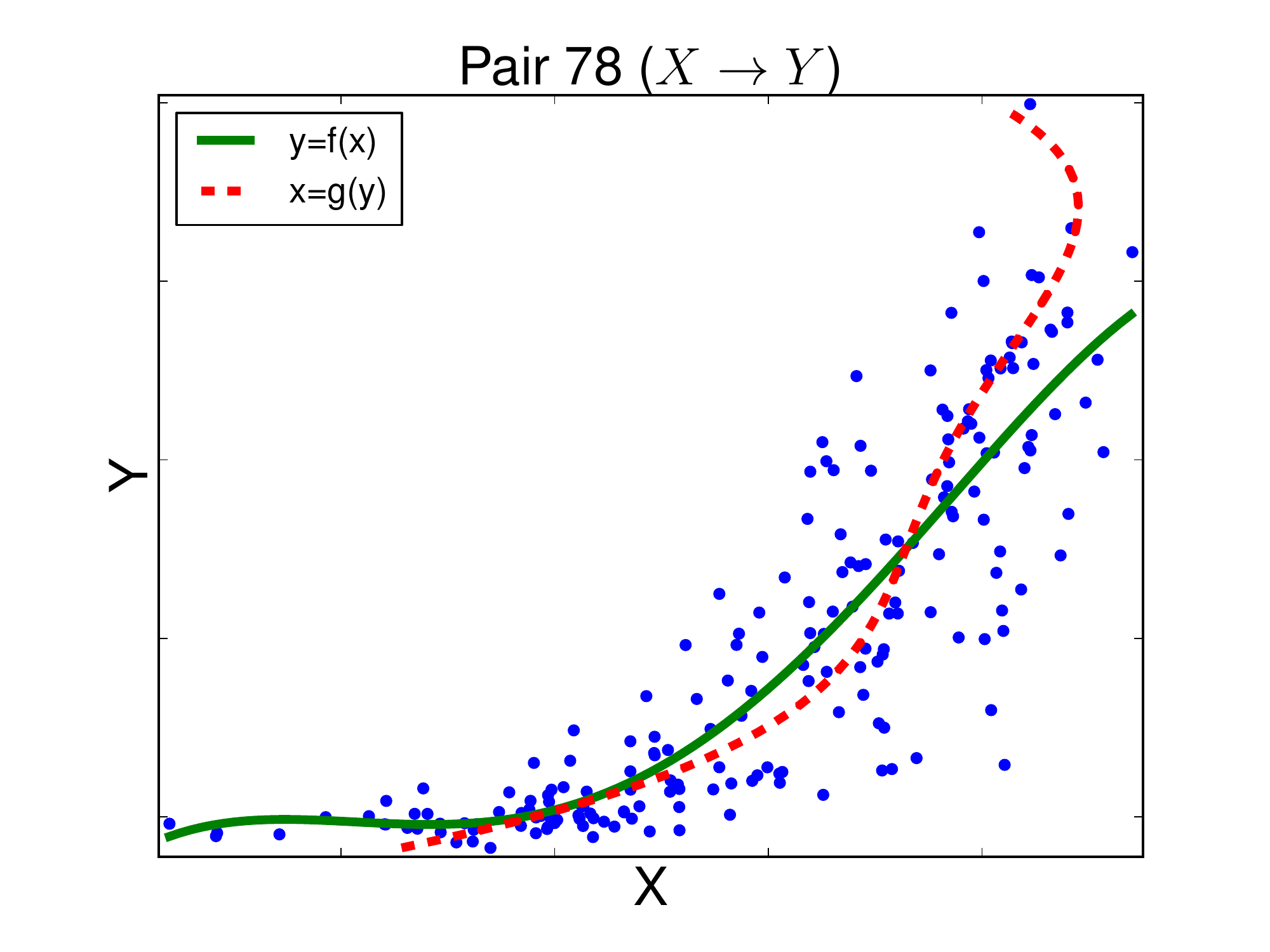} 
    \caption{Pair 78 and regressors $f,g$}
    \label{fig:tuebingen-pair}
  \end{subfigure}
  \begin{subfigure}[b]{0.48\textwidth}
    \includegraphics[width=\textwidth]{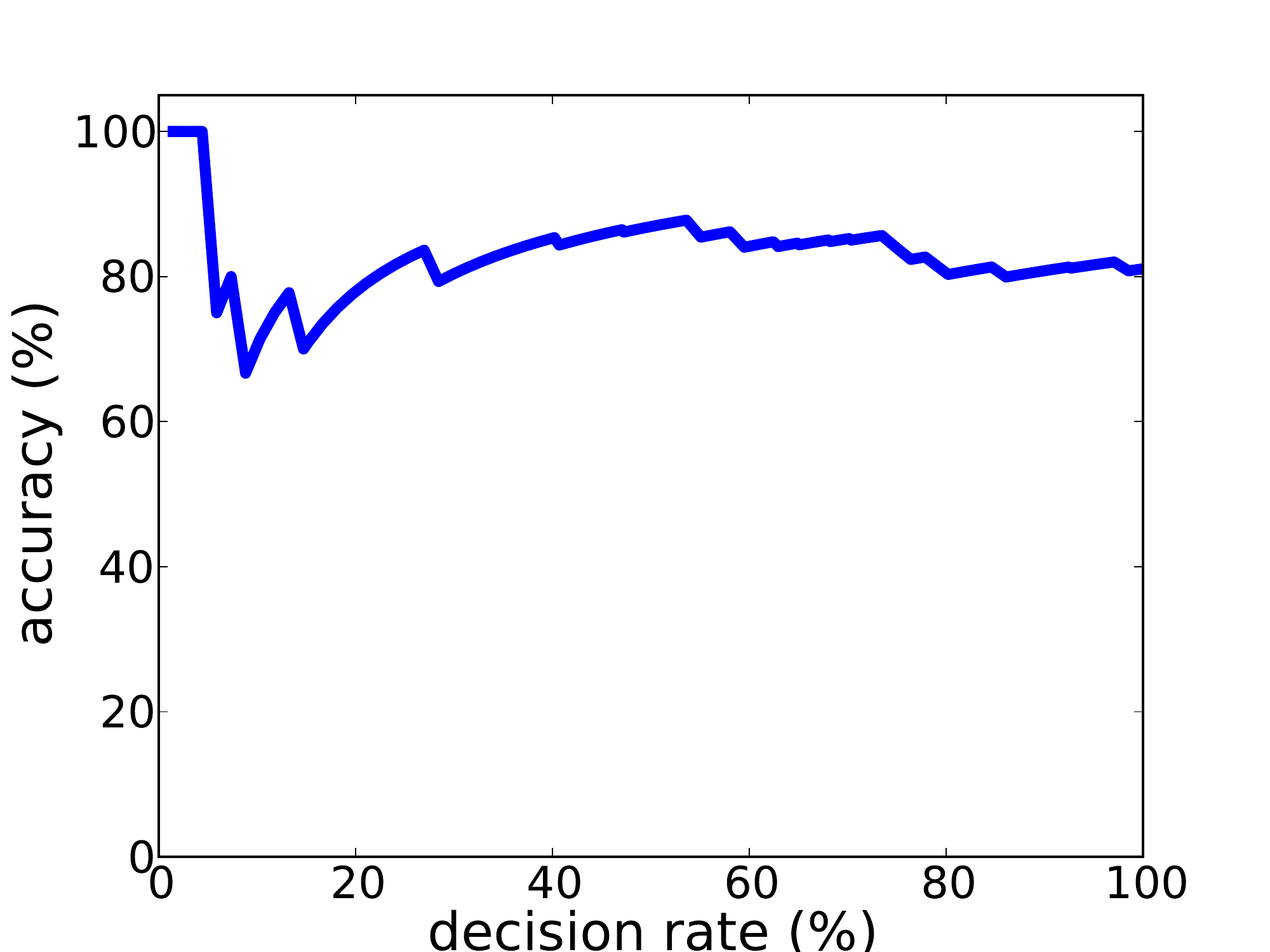} 
    \caption{the accuracy curve}
    \label{fig:tuebingen-result}
  \end{subfigure}
  \caption{(\subref{fig:tuebingen-pair}) Scatter plot of the data of causal pair 78 in the \texttt{CauseEffectPairs} benchmarks, along with the forward and backward function fits, $y=f(x)$ and $x=g(y)$. (\subref{fig:tuebingen-result}) Accuracy of cause-effect decisions on all the 81 pairs in the \texttt{CauseEffectPairs} benchmarks.}.
  \label{fig:cause-effect-res}
\end{figure} 

Figure \ref{fig:tuebingen-result} depicts the accuracy versus the decision rate for the $81$ pairs in the \texttt{CauseEffectPairs} benchmark collection. The method achieves an accuracy of 80\%, which is significantly better than random guessing, when forced to infer the causal direction of all 81 pairs. 

%Nevertheless, the theoretical properties of our test statistic, its asymptotic distributions, and connections to existing approaches remain open questions. We are also interested in generalizing our method to more general cases such as when the noise is not additive, e.g., multiplicative noise model, or when we have more than two random variables. We plan to address these questions in future works.

\section{Conclusions}\label{sec:conclu}
We have developed a kernel-based approach to compute functional operations on random variables taking values in arbitrary domains. We have proposed estimators for RKHS representations of those quantities, evaluated the approach on synthetic data, and showed how it can be used for cause-effect inference. While the results are encouraging, the material presented in this article only describes the main ideas, and much remains to be done. We believe there is significant potential for a unified perspective on probabilistic programming based on the described methods, and hope that some of the open problems will be addressed in future work.

%There are possibilities for future works. For example, how to contruct a more sophisticate test statistic and analyze its asymptotic distribution, how to consider other models such as multiplicative noise model, and how to generalize our framework under weaker assumptions on the model. We plan to address these questions in future works.\krikcomment{Bernhard, these are the open questions that I think are interesting for future work, but you can remove it if you think it is not necessary to mention them in this manuscript.}

\iffalse
% For one-column wide figures use
\begin{figure}
% Use the relevant command to insert your figure file.
% For example, with the graphicx package use
  \includegraphics{example.eps}
% figure caption is below the figure
\caption{Please write your figure caption here}
\label{fig:1}       % Give a unique label
\end{figure}
%
% For two-column wide figures use
\begin{figure*}
% Use the relevant command to insert your figure file.
% For example, with the graphicx package use
  \includegraphics[width=0.75\textwidth]{example.eps}
% figure caption is below the figure
\caption{Please write your figure caption here}
\label{fig:2}       % Give a unique label
\end{figure*}
%
% For tables use
\begin{table}
% table caption is above the table
\caption{Please write your table caption here}
\label{tab:1}       % Give a unique label
% For LaTeX tables use
\begin{tabular}{lll}
\hline\noalign{\smallskip}
first & second & third  \\
\noalign{\smallskip}\hline\noalign{\smallskip}
number & number & number \\
number & number & number \\
\noalign{\smallskip}\hline
\end{tabular}
\end{table}
\fi

\begin{acknowledgements}
Thanks to Dominik Janzing, Le Song and Ilya Tolstikhin for dicussions and comments.
\end{acknowledgements}

% BibTeX users please use one of
%\bibliographystyle{spbasic}      % basic style, author-year citations
%\bibliographystyle{spmpsci}      % mathematics and physical sciences
%\bibliographystyle{spphys}       % APS-like style for physics
\bibliography{kme_algebra}   % name your BibTeX data base

\begin{thebibliography}{10}
\providecommand{\url}[1]{{#1}}
\providecommand{\urlprefix}{URL }
\expandafter\ifx\csname urlstyle\endcsname\relax
  \providecommand{\doi}[1]{DOI~\discretionary{}{}{}#1}\else
  \providecommand{\doi}{DOI~\discretionary{}{}{}\begingroup
  \urlstyle{rm}\Url}\fi

\small

\bibitem{BacJor02}
Bach, F.R., Jordan, M.I.: Kernel independent component analysis.
\newblock Journal of Machine Learning Research \textbf{3}, 1--48 (2002)

\bibitem{Berlinet04:RKHS}
Berlinet, A., Agnan, T.C.: Reproducing Kernel {H}ilbert Spaces in Probability
  and Statistics.
\newblock Kluwer Academic Publishers (2004)

\bibitem{BosGuyVap92}
Boser, B.E., Guyon, I.M., Vapnik, V.N.: A training algorithm for optimal margin
  classifiers.
\newblock In: Proceedings of the Fifth Annual Workshop on Computational
  Learning Theory, COLT '92, pp. 144--152. ACM, New York, NY, USA (1992)

\bibitem{Cas14:PP}
Cassel, J.: Probabilistic programming with stochastic memoization: Implementing
  non-parametric bayesian inference.
\newblock Mathematica Journal \textbf{16} (2014)

\bibitem{Epstein48:Mellin}
Epstein, B.: Some applications of the {M}ellin transform in statistics.
\newblock The Annals of Mathematical Statistics \textbf{19}(3), 370--379 (1948)

\bibitem{Ferson96:MonteCarlo}
Ferson, S.: What {M}onte {C}arlo methods cannot do.
\newblock Human and Ecological Risk Assessment: An International Journal
  \textbf{2}(4), 990--1007 (1996).
\newblock \doi{10.1080/10807039609383659}

\bibitem{FukBacJor09}
Fukumizu, K., Bach, F., Jordan, M.I.: Kernel dimension reduction in regression.
\newblock Annals of Statistics \textbf{37}(4), 1871--1905 (2009)

\bibitem{FukGreSunSch08}
Fukumizu, K., Gretton, A., Sun, X., Sch{\"o}lkopf, B.: Kernel measures of
  conditional dependence.
\newblock In: J.C. Platt, D.~Koller, Y.~Singer, S.~Roweis (eds.) Advances in
  Neural Information Processing Systems, vol.~20, pp. 489--496. Curran, Red
  Hook, NY, USA (2008)

\bibitem{Fukumizu13a:KBR}
Fukumizu, K., Song, L., Gretton, A.: Kernel {B}ayes' rule: Bayesian inference
  with positive definite kernels.
\newblock Journal of Machine Learning Research \textbf{14}, 3753--3783 (2013)

\bibitem{GHNR14}
Gordon, A.D., Henzinger, T.A., Nori, A.V., Rajamani, S.K.: Probabilistic
  programming.
\newblock In: International Conference on Software Engineering (ICSE, FOSE
  track) (2014)

\bibitem{Gretton12:KTT}
Gretton, A., Borgwardt, K.M., Rasch, M.J., Sch\"{o}lkopf, B., Smola, A.: A
  kernel two-sample test.
\newblock Journal of Machine Learning Research \textbf{13}, 723--773 (2012)

\bibitem{GreBouSmoSch05}
Gretton, A., Bousquet, O., Smola, A.J., Sch{\"o}lkopf, B.: Measuring
  statistical dependence with {H}ilbert-{S}chmidt norms.
\newblock In: S.~Jain, H.U. Simon, E.~Tomita (eds.) Algorithmic Learning
  Theory: 16th International Conference, pp. 63--78. Springer, Berlin, Germany
  (2005)

\bibitem{Gretton09:CSKMM}
Gretton, A., Smola, A.J., Huang, J., Schmittfull, M., Borgwardt, K.,
  Sch{\"o}lkopf, B.: Covariate shift by kernel mean matching.
\newblock In: J.Q. Candela, M.~Sugiyama, A.~Schwaighofer, N.D. Lawrence (eds.)
  Dataset Shift in Machine Learning, pp. 131--160. MIT Press, Cambridge, MA,
  USA (2009)

\bibitem{HarHirSch13}
Harmeling, S., Hirsch, M., Schölkopf, B.: On a link between kernel mean maps
  and {F}raunhofer diffraction, with an application to super-resolution beyond
  the diffraction limit.
\newblock In: CVPR, pp. 1083--1090. IEEE (2013)

\bibitem{HofSchSmo08}
Hofmann, T., Sch{\"o}lkopf, B., Smola, A.J.: Kernel methods in machine
  learning.
\newblock Annals of Statistics \textbf{36}(3), 1171--1220 (2008).
\newblock \urlprefix\url{http://arxiv.org/pdf/math/0701907}

\bibitem{Huang07:SSB}
Huang, J., Smola, A., Gretton, A., Borgwardt, K., Sch{\"o}lkopf, B.: Correcting
  sample selection bias by unlabeled data.
\newblock In: Advances in Neural Information Processing Systems (NIPS), pp.
  601--608. MIT Press, Cambridge, MA, USA (2007)

\bibitem{Stein61:JSE}
James, W., Stein, J.: Estimation with quadratic loss.
\newblock In: Proceedings of the Third Berkeley Symposium on Mathematical
  Statistics and Probability, pp. 361--379. University of California Press
  (1961)

\bibitem{JaroszewiczK12:Arithmetic}
Jaroszewicz, S., Korzen, M.: Arithmetic operations on independent random
  variables: A numerical approach.
\newblock SIAM J. Scientific Computing \textbf{34}(3) (2012)

\bibitem{KagFuk14}
Kanagawa, M., Fukumizu, K.: Recovering distributions from {G}aussian {RKHS}
  embeddings.
\newblock In: JMLR W\&CP 33 (Proc. AISTATS 2014), pp. 457--465 (2014)

\bibitem{Kpotufe_etal_2014}
Kpotufe, S., Sgouritsa, E., Janzing, D., Sch\"{o}lkopf, B.: Consistency of
  causal inference under the additive noise model.
\newblock In: ICML (2014)

\bibitem{Milios09:Algebraic}
Milios, D.: Probability distributions as program variables.
\newblock Master's thesis, School of Informatics, University of Edinburgh
  (2009)

\bibitem{Mooij14:Benchmark}
Mooij, J.M., Peters, J., Janzing, D., Zscheischler, J., Sch{\"o}lkopf, B.:
  Distinguishing cause from effect using observational data: methods and
  benchmarks.
\newblock arXiv:1412.3773  (2014)

\bibitem{Muandet13:DG}
Muandet, K., Balduzzi, D., Sch\"{o}lkopf, B.: Domain generalization via
  invariant feature representation.
\newblock In: ICML, pp. 10--18 (2013)

\bibitem{Muandet12:SMM}
Muandet, K., Fukumizu, K., Dinuzzo, F., Sch{\"o}lkopf, B.: {Learning from
  distributions via support measure machines}.
\newblock In: P.~Bartlett, F.~Pereira, C.~Burges, L.~Bottou, K.~Weinberger
  (eds.) Advances in Neural Information Processing Systems 25, pp. 10--18
  (2012)

\bibitem{MuandetFSGS2013}
Muandet, K., Fukumizu, K., Sriperumbudur, B., Gretton, A., Sch{\"o}lkopf, B.:
  Kernel mean estimation and {S}tein effect.
\newblock In: E.P. Xing, T.~Jebara (eds.) Proceedings of the 31st International
  Conference on Machine Learning, W\&CP 32 (1), pp. 10--18. Journal of Machine
  Learning Research (2014)

\bibitem{Muandet2014:KMSE}
Muandet, K., Sriperumbudur, B., Sch\"{o}lkopf, B.: Kernel mean estimation via
  spectral filtering.
\newblock In: Z.~Ghahramani, M.~Welling, C.~Cortes, N.~Lawrence, K.~Weinberger
  (eds.) Advances in Neural Information Processing Systems 27, pp. 1--9. Curran
  Associates, Inc. (2014)

\bibitem{Paige14:PP}
Paige, B., Wood, F.: A compilation target for probabilistic programming
  languages.
\newblock In: Journal of Machine Learning Research; ICML 2014, pp. 1935--1943
  (2014)

\bibitem{Pearl2009}
Pearl, J.: Causality: Models, Reasoning, and Inference, 2nd edn.
\newblock Cambridge University Press, New York, NY (2009)

\bibitem{Peters2014anm}
Peters, J., Mooij, J., Janzing, D., Sch\"olkopf, B.: Causal discovery with
  continuous additive noise models.
\newblock Journal of Machine Learning Research \textbf{15}, 2009--2053 (2014)

\bibitem{Prasad1970:Algebraic}
Prasad, R.D.: Probability distributions of algebraic functions of independent
  random variables.
\newblock SIAM Journal on Applied Mathematics \textbf{18}(3), 614--626 (1970)

\bibitem{Rahimi07:RFF}
Rahimi, A., Recht, B.: {Random features for large-scale kernel machines}.
\newblock In: Advances in Neural Information Processing Systems (2007)

\bibitem{Sch_etal_NC_support}
Sch\"{o}lkopf, B., Platt, J.C., Shawe-Taylor, J.C., Smola, A.J., Williamson,
  R.C.: Estimating the support of a high-dimensional distribution.
\newblock Neural Computation \textbf{13}(7), 1443--1471 (2001)

\bibitem{Scholkopf01:LKS}
Sch\"olkopf, B., Smola, A.J.: Learning with Kernels: Support Vector Machines,
  Regularization, Optimization, and Beyond.
\newblock MIT Press, Cambridge, MA, USA (2001)

\bibitem{SchSmo02}
Sch{\"o}lkopf, B., Smola, A.J.: Learning with Kernels.
\newblock MIT Press, Cambridge, MA, USA (2002).
\newblock \urlprefix\url{http://www.learning-with-kernels.org}

\bibitem{ShaCri04}
Shawe-Taylor, J., Cristianini, N.: Kernel Methods for Pattern Analysis.
\newblock Cambridge University Press, Cambridge, UK (2004)

\bibitem{Smola07Hilbert}
Smola, A., Gretton, A., Song, L., Sch\"{o}lkopf, B.: A {H}ilbert space
  embedding for distributions.
\newblock In: Proc.\ Algorithmic Learning Theory, pp. 13--31. Springer-Verlag
  (2007)

\bibitem{Song08:Thesis}
Song, L.: Learning via {H}ilbert space embedding of distributions.
\newblock Ph.D. thesis, The School of Information Technologies, The University
  of Sydney (2008)

\bibitem{Song10:HMM}
Song, L., Boots, B., Siddiqi, S.M., Gordon, G., Smola, A.J.: {H}ilbert space
  embeddings of hidden {M}arkov models.
\newblock In: Proceedings of the 27th International Conference on Machine
  Learning (ICML) (2010)

\bibitem{Song11:KBP}
Song, L., Gretton, A., Bickson, D., Low, Y., Guestrin, C.: Kernel belief
  propagation.
\newblock In: Proceedings of the 14th International Conference on Artificial
  Intelligence and Statistics (AISTATS) (2011)

\bibitem{Song10:KCOND}
Song, L., Huang, J., Smola, A., Fukumizu, K.: {H}ilbert space embeddings of
  conditional distributions with applications to dynamical systems.
\newblock In: ICML (2009)

\bibitem{SonZhaSmoGreetal08}
Song, L., Zhang, X., Smola, A.J., Gretton, A., Sch{\"o}lkopf, B.: Tailoring
  density estimation via reproducing kernel moment matching.
\newblock In: W.W. Cohen, A.~McCallum, S.~Roweis (eds.) Proceedings of the 25th
  International Conference on Machine Learning, pp. 992--999. ACM Press, New
  York (2008)

\bibitem{Springer79:Algebra}
Springer, M.: The algebra of random variables.
\newblock Wiley series in probability and mathematical statistics. Wiley (1979)

\bibitem{Springer1966:Product}
Springer, M.D., Thompson, W.E.: The distribution of products of independent
  random variables.
\newblock SIAM Journal on Applied Mathematics \textbf{14}(3), 511--526 (1966)

\bibitem{Sriperumbudur08injectivehilbert}
Sriperumbudur, B.K., Gretton, A., Fukumizu, K., Lanckriet, G., Sch\"{o}lkopf,
  B.: Injective {H}ilbert space embeddings of probability measures.
\newblock In: The 21st Annual Conference on Learning Theory (COLT) (2008)

\bibitem{Sriperumbudur10:Metrics}
Sriperumbudur, B.K., Gretton, A., Fukumizu, K., Sch\"{o}lkopf, B., Lanckriet,
  G.: {H}ilbert space embeddings and metrics on probability measures.
\newblock Journal of Machine Learning Research \textbf{99}, 1517--1561 (2010)

\bibitem{Steinwart02:IKC}
Steinwart, I.: On the influence of the kernel on the consistency of support
  vector machines.
\newblock Journal of Machine Learning Research \textbf{2}, 67--93 (2002)

\bibitem{Steinwart08:SVM}
Steinwart, I., Christmann, A.: Support Vector Machines.
\newblock Springer (2008)

\bibitem{Szabo14:Regression}
Szab{\'o}, Z., Gretton, A., P{\'o}czos, B., Sriperumbudur, B.: Two-stage
  sampled learning theory on distributions.
\newblock arXiv:1402.1754  (2014)

\bibitem{Williamson89:PA}
Williamson, R.C.: Probabilistic arithmetic.
\newblock Ph.D. thesis, Department of Electrical Engineering, University of
  Queensland, St. Lucia, Queensland, Australia, (1989)

\bibitem{Wood14:PP}
Wood, F., van~de {Meent}, J.W., Mansinghka, V.: A new approach to probabilistic
  programming inference.
\newblock In: Artificial Intelligence and Statistics (2014)

\bibitem{ZhangPJS2011}
Zhang, K., Peters, J., Janzing, D., Sch{\"o}lkopf, B.: {Kernel-based
  Conditional Independence Test and Application in Causal Discovery}.
\newblock In: F.~Cozman, A.~Pfeffer (eds.) 27th Conference on Uncertainty in
  Artificial Intelligence (UAI 2011), pp. 804--813. AUAI Press, Corvallis, OR,
  USA (2011)

\end{thebibliography}

% Non-BibTeX users please use
%\begin{thebibliography}{}
%
% and use \bibitem to create references. Consult the Instructions
% for authors for reference list style.
%
%\bibitem{RefJ}
% Format for Journal Reference
%Author, Article title, Journal, Volume, page numbers (year)
% Format for books
%\bibitem{RefB}
%Author, Book title, page numbers. Publisher, place (year)
% etc
%\end{thebibliography}

\end{document}